\DeclareMathOperator*{\argmin}{argmin}
\DeclarePairedDelimiter{\ceil}{\lceil}{\rceil}
\newcommand*{\tran}{^\top}  
\def\ps@pprintTitle{%
 \let\@oddhead\@empty
 \let\@evenhead\@empty
 \def\@oddfoot{\footnotesize\itshape arXiv post-print. This article was published in: doi.org/10.1016/j.ijforecast.2020.06.003 \hfill\today}%
 \let\@evenfoot\@oddfoot}
\tikzset{
    block/.style = {draw, fill=white, rectangle, minimum height=1.3em, minimum width=3em},
    tmp/.style  = {coordinate}, 
    sum/.style= {draw, fill=white, circle, node distance=1cm},
    input/.style = {coordinate},
    output/.style= {coordinate},
    pinstyle/.style = {pin edge={to-,thin,black}},
    bolanro/.style = {anchor=center,circle,fill=black,text=white,font=\scriptsize,inner sep=0.25mm}
}
\definecolor{fgreen}{rgb}{0.0, 0.5, 0.0}
\newtheorem{proposition}{Proposition}
\newdefinition{rmk}{Remark}
\newproof{proof}{Proof}
\newproof{pot}{Proof of Theorem \ref{thm2}}
\begin{document}

\begin{frontmatter}
\title{A critical overview of privacy-preserving approaches \\for collaborative forecasting}


\author[inesc,fcup]{Carla Gon\c{c}alves}
\ead{carla.s.goncalves@inesctec.pt}
\author[inesc]{Ricardo J. Bessa\corref{corr}}
\ead{ricardo.j.bessa@inesctec.pt}
\author[dtu]{Pierre Pinson}
\ead{ppin@elektro.dtu.dk}

\address[inesc]{INESC TEC, Porto, Portugal}
\address[fcup]{Faculty of Sciences of the University of Porto, Portugal}
\address[dtu]{Technical University of Denmark, Copenhagen, Denmark}

\begin{abstract}
Cooperation between different data owners may lead to an improvement in forecast quality -- for instance by benefiting from spatial-temporal dependencies in geographically distributed time series. Due to business competitive factors and personal data protection questions, said data owners might be unwilling to share their data, which increases the interest in collaborative privacy-preserving forecasting. This paper analyses the state-of-the-art and unveils several shortcomings of existing methods in guaranteeing data privacy when employing Vector Autoregressive~(VAR) models. The methods are divided into three groups: data transformation, secure multi-party computations, and decomposition methods. The analysis shows that state-of-the-art techniques have limitations in preserving data privacy, such as {{\it (i)}~the necessary trade-off between privacy and forecasting accuracy, empirically evaluated through simulation and real-world experiment based on solar data; {\it (ii)}~the iterative model fitting processes which reveal data after a number of iterations.}
\end{abstract}

\begin{keyword}
Vector autoregression \sep Forecasting \sep Time series \sep Privacy-preserving \sep ADMM 
\end{keyword}
\end{frontmatter}


\section{Introduction}

The progress of the internet-of-things (IoT) and big data technologies is fostering a disruptive evolution in the development of innovative data analytics methods and algorithms. This {also} yields ideal conditions for data-driven services (from descriptive to prescriptive analysis), in which the accessibility to large volumes of data is a fundamental requirement. In this sense, the combination of data from different owners can provide valuable information for end-users and increase their competitiveness. 

In order to combine data coming from different sources, several statistical approaches have emerged. For example, in time series collaborative forecasting, the Vector Autoregressive (VAR) model has been widely used to forecast variables that may have different data owners. In the energy sector, the VAR model is deemed appropriate to update very short-term forecasts (e.g., from 15 minutes to 6 hours ahead) with recent data, thus taking advantage of geographically distributed data collected from sensors (e.g., anemometer, pyranometer) and/or wind turbines and solar power inverters~\citep{tastu2013probabilistic, Bessa2015}. The VAR model can also be used in short-term electricity price forecasting~\citep{Ziel2018}.
Furthermore, the large number of potential data owners favors the estimation of the VAR model's coefficients by applying distributed optimization algorithms. The Alternating Direction Method of Multipliers (ADMM) is a widely used convex optimization technique; see \cite{boyd2011distributed}. The combination of the VAR model and ADMM can be used jointly for collaborative forecasting \citep{Cavalcante2017}, which consists of collecting and combining information from diverse owners. The collaborative forecasting methods require sharing data or coefficients, depending on the structure of the data, and may or may not be focused on data privacy. This process is also called federated learning~\citep{yang2019federated}.

Some other examples of collaborative forecasting include: (a)~forecasting and inventory control in supply chains, in which the benefits of various types of information-sharing options are investigated~ \citep{Aviv2003, Aviv2007}; (b)~forecasting traffic flow data (i.e.\ speeds) among different locations~\citep{Ravi2009}; (c)~forecasting retail prices of a specific product at every outlet by using historical retail prices of the product at a target outlet and at competing outlets~\citep{Ahmad2016}. The VAR model is the simplest collaborative model but, conceptually, a collaborative forecasting model for time series does not need to be a VAR. Furthermore, it is possible to extend the VAR model to include exogenous information (see~\cite{Nicholson2017} for more details) and to model non-linear relationships with past values (e.g. \cite{Li2009} extend the additive model structure to a multivariate setting). 

Setting aside the significant potential of the VAR model for collaborative forecasting, the concerns with the privacy of personal and commercially sensitive data comprise a critical barrier and require privacy-preserving algorithmic solutions for estimating the coefficients of the model. 

A confidentiality breach occurs when third parties recover the data provided in confidence without consent. The leaking of a single record from the dataset may have a different impact according to the nature of the data. For example, in medical data where each record represents a different patient, it could lead to the disclosure of all the details about said patient. On the other hand, concerning renewable energy generation time series, knowing that 30~MWh was produced in a given hour does not provide very relevant information to a competitor. Hereafter, the term confidentiality breach designates the reconstruction of the entire dataset by another party.

These concerns with data confidentiality motivated the research to handle confidential data in methods such as linear regression and classification problems~\citep{du2004privacy}, ridge linear regression~\citep{karr2009privacy}, logistic regression~\citep{wu2012g}, survival analysis~\citep{lu2015webdisco}, aggregated statistics for time series data~\citep{Jia2014}, etc. Aggregated statistics consist of aggregating a set of time series data through a specific function, such as the average (e.g., average amount of daily exercise), sum, minimum or maximum. However, certain literature approaches identified confidentiality breaches, showing that the statistical methods developed to protect data privacy should be analyzed to confirm their robustness and {that} additional research may be required to address overlooked limitations~\citep{fienberg2009valid}. Furthermore, the application of these methodologies to the VAR model needs to be carefully analyzed, since the target variables are the time series of each data owner, and the covariates are the lags of the same time series, meaning that both target and covariates share a large proportion of values. 

The simplest solution would be {having the} data owners agreeing on a commonly trusted entity (or a central node), {capable of gathering} private data, solving the associated model's fitting problem on behalf of the data owners, and then returning the results~\citep{pinson2016introducing}. However, in many cases, the data owners are unwilling to share their data even with a trusted central node. This has also motivated the development of data markets to monetize data and promote data sharing~\citep{Agarwal2018}, which can be driven by blockchain and smart contracts technology~\citep{Kurtulmus2018}.

Another possibility would be to apply differential privacy mechanisms, which consist in adding properly calibrated noise to an algorithm (e.g., adding noise to the coefficients estimated during each iteration of the fitting procedure) or to the data itself. Differential privacy is not an algorithm, but a rigorous definition of privacy that is useful for quantifying and bounding privacy loss (i.e., how much original data a party can recover when receiving data protected with added noise)~\citep{dwork2009differential}. It requires computations insensitive to changes in any particular record or intermediate computations, thereby restricting data leaks through the results -- this is elaborated in~\ref{diff-privacy}. While computationally efficient and popular, these techniques invariably degrade the predictive performance of the model~\citep{yang2019federated} and are not very effective, as this paper shows.

The present paper conducts a review of the state-of-the-art in statistical methods for collaborative forecasting with privacy-preserving approaches. This work is not restricted to a simple overview of the existing methods and {it} performs a critical evaluation of said methods, from a mathematical and numerical point of view, namely when applied to the VAR model. The major contribution to the literature is to show gaps and downsides of current methods and to present insights for further improvements towards fully privacy-preserving VAR forecasting methods.   

In this work, we analyze the existing state-of-the-art privacy-preserving techniques, dividing them into the following groups: 
\begin{itemize}
\item \textit{Data transformation methods}: each data owner transforms their data before the model's fitting process, by adding randomness to the original data in such a way that high accuracy and privacy can be achieved at the end of the fitting process. The statistical method is independent of the transformation function and it is applied to the transformed data.
\item \textit{Secure multi-party computation protocols}: the encryption of the data occurs while fitting the statistical model (i.e.\ intermediate calculations of an iterative process) and data owners are required to conjointly compute a function over their data with protocols for secure matrix operations. A protocol consists of a set of rules that determine how data owners must operate to determine said function. These rules establish the calculations assigned to each data owner, what information should be shared among them, in addition to the conditions necessary for the adequate implementation of said calculations.
\item \textit{Decomposition-based methods}: the optimization problem is decomposed into sub-problems allowing each data owner to fit their model's coefficients separately.
\end{itemize}

The remaining of the paper is organized as follows: {S}ection~\ref{sec:privacy_methods} describes the state-of-the-art for collaborative forecasting with privacy-preserving; {S}ection~\ref{sec:background} describes the VAR model as well as coefficients estimators, and performs a critical evaluation of the state-of-the-art methods when applied to the VAR model (solar energy time series data are used in the numerical analysis). Section~\ref{sec:discussion} focuses on the discussion and comparison of the presented approaches, while the conclusions are presented in {S}ection~\ref{sec:conclusion}.

\section{Privacy-preserving Approaches}\label{sec:privacy_methods}

For notation purposes, vectors and matrices are denoted by bold small and capital letters, e.g. $\mathbf a$ and $\mathbf A$, respectively. The vector $\mathbf a=[a_1, \dots, a_k]^\top$ represents a column vector with $k$ dimension, where $a_i$ are scalars, $i=1,\dots,k$. The column-wise joining of vectors and matrices is indicated by $[\mathbf a, \mathbf b]$ and $[\mathbf A, \mathbf B]$, respectively. 

Furthermore, $\mathbf Z \in \mathbb{R}^{T \times M}$ is the covariate matrix and $\mathbf Y \in \mathbb{R}^{T\times N}$ is the target matrix, considering $n$ data owners. The values $T$, $M$ and $N$ are the number of records, covariates and target variables, respectively. When considering collaborative forecasting models, different divisions of the data may be considered. Figure~\ref{fig:datasplit_} shows the most common one, i.e.
\begin{enumerate}
\item \textit{Data split by records:} the data owners, represented as $A_i, i=1,\dots, n$, observe the same features for different groups of samples, e.g. different timestamps in the case of time series. $\mathbf Z$ is split into $\mathbf Z_{A_i}^r \in \mathbb{R}^{T_{A_i} \times M}$ and $\mathbf Y$ into $\mathbf Y_{A_i}^r \in \mathbb{R}^{T_{A_i} \times N}$, such that $\sum_{i=1}^n T_{A_i}=T$; 
\item \textit{Data split by features:} the data owners observe different features of the same {record}s. $\mathbf Z = [\mathbf Z_{A_1},\dots,\mathbf Z_{A_n}]$, $\mathbf Y = [\mathbf Y_{A_1},\dots,\mathbf Y_{A_n}]$, such that $\mathbf Z_{A_i} \in \mathbb{R}^{T \times M_{A_i}}$, $\mathbf Y_{A_i} \in \mathbb{R}^{T \times N_{A_i}}$, with $\sum_{i=1}^n M_{A_i}=M$ and $\sum_{i=1}^n N_{A_i}=N$; 
\end{enumerate}

This section summarizes state-of-the-art approaches to deal with privacy-preserving collaborative forecasting methods. Section~\ref{sec:TBM} describes the methods that ensure confidentiality by transforming the data. Section~\ref{sec:SCM} presents and analyzes the secure multi-party protocols. Section~\ref{sec:decomp} describes the decomposition-based methods.

\begin{figure}
\centering
\includegraphics[trim={0 0 0 0cm},clip]{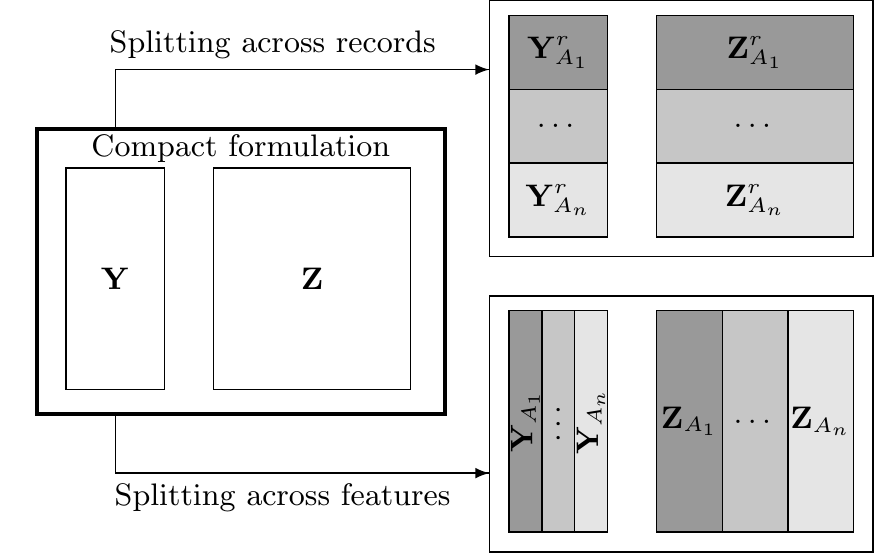}
\caption{Common data division structures.}
\label{fig:datasplit_}
\end{figure}

\subsection{Data Transformation Methods}\label{sec:TBM}

Data transformation methods use operator $\mathcal T$ to transform the data matrix $\mathbf X$ into $\tilde{\mathbf X}=\mathcal T(\mathbf X)$. Then, the problem is solved in the transformed domain. A common method of {masking} sensitive data is adding or multiplying it by perturbation matrices. In additive randomization, random noise is added to the data in order to mask the values of records. Consequently, the more masked the data becomes, the more secure it will be, as long as the differential privacy definition is respected (see~\ref{diff-privacy}). However, the use of randomized data implies the deterioration of the estimated statistical models, {and the estimated coefficients of said data} should be close to the estimated {coefficients after} using original data~\citep{zhou2009compressed}. 

Concerning multiplicative randomization, it {enables} changing the dimensions of the data by multiplying it by random perturbation matrices. If the perturbation matrix $\mathbf W \in \mathbb{R}^{k\times m}$ multiplies the original data $\mathbf X \in \mathbb{R}^{m\times n}$ on the left (pre-multiplication), i.e.\  $\mathbf{WX}$, then it is possible to change the number of records; otherwise, if $\mathbf W\in \mathbb{R}^{n\times s}$ multiplies $\mathbf X\in \mathbb{R}^{m\times n}$ on the right (post-multiplication), i.e.\  $\mathbf{XW}$, it is possible to modify the number of features. Hence, it is possible to change both dimensions by applying both pre and post-multiplication by perturbation matrices.

\subsubsection{Single Data Owner}

The use of linear algebra to mask the data is a common practice in recent outsourcing approaches, in which a data owner resorts to the cloud to fit their model's coefficients, without sharing confidential data. For example, in~\cite{ma2017efficient} the coefficients that optimize the linear regression model
\begin{equation}
    \mathbf{y}=\mathbf{X} \boldsymbol{\beta} + \boldsymbol{\varepsilon} \ ,
\end{equation}
with covariate matrix $\mathbf{X} \in \mathbb{R}^{m\times n}$, target variable $\mathbf{y} \in \mathbb{R}^n$, coefficient vector $\boldsymbol{\beta}\in \mathbb{R}^n$ and error vector $\boldsymbol{\varepsilon} \in \mathbb{R}^n$, are estimated through the regularized least squares estimate for the ridge {linear} regression, with penalization term $\lambda>0$,
\begin{equation}
    \hat{\boldsymbol{\beta}}_{\text{ridge}} = (\mathbf{X}\tran\mathbf{X}+\lambda \mathbf{I})^{-1}\mathbf{X}\tran\mathbf{y}.
\end{equation}
In order to compute $\hat{\boldsymbol{\beta}}_{\text{ridge}}$ {via} a cloud server, the authors consider that 
\begin{equation}
    \hat{\boldsymbol{\beta}}_{\text{ridge}} = \mathbf{A}^{-1} \mathbf{b} \ ,
\end{equation} 
where $\mathbf{A}=(\mathbf{X}\tran\mathbf{X}+\lambda \mathbf{I})^{-1}$ and $\mathbf{b}=\mathbf{X}\tran\mathbf{y}$, $\mathbf{A} \in \mathbb{R}^{n \times n}$, $\mathbf{b} \in \mathbb{R}^{n}$. Then, the masked matrices $\mathbf{MAN}$  and $ \mathbf{M}(\mathbf{b}+\mathbf{Ar})$ are sent to the server which computes
\begin{equation}\label{eq:transf_system}
    \hat{\boldsymbol{\beta}}' = (\textbf{MAN})^{-1}(\mathbf{M}(\mathbf{b}+\mathbf{Ar})) \ ,
\end{equation} 
where $\mathbf M$, $\mathbf N$, and $\mathbf r$ are randomly generated matrices, $\mathbf{M}, \mathbf{N} \in \mathbb{R}^{n\times n}$, $\mathbf{r} \in \mathbb{R}^{n}$. Finally, the data owner receives $\hat{\boldsymbol{\beta}}'$ and recovers the original {coefficients} by computing $\hat{\boldsymbol{\beta}}_\text{ridge}= \mathbf{N}\hat{\boldsymbol{\beta}}'-\mathbf{r}$.

Data normalization is also a data transformation approach that masks data by transforming the original features into a new range through the use of a mathematical function. There are many methods for data normalization, the most important ones being $z$-score and min-max normalization~\citep{jain2011min}, which are useful when the actual minimum and maximum values of the features are unknown. However, in many applications, these values are either known or publicly available, and normalized values still encompass commercially valuable information.

{As to} time series data, other approaches for data randomization make use of the Fourier and wavelet transforms. The Fourier transform allows representing periodic time series as a linear combination of sinusoidal components (sine and cosine). In~\cite{papadimitriou2007time}, each data owner generates a noise time series by: (i)~adding Gaussian noise to relevant coefficients, or (ii)~{disrupting} each sinusoidal component by randomly changing its magnitude and phase. Similarly, the wavelet transform represents the time series as a combination of functions (e.g. the Mexican hat or the Poisson wavelets), and randomness can be introduced by adding random noise to the coefficients~\citep{papadimitriou2007time}. However, there are no privacy guarantees since noise does not respect any formal definition such as differential privacy. 

\subsubsection{Multiple Data Owners} \label{sec:post_multiplication_multiple}

The task of masking data is even more challenging when dealing with different data owners, since it is crucial to ensure that the transformations that data owners make to their data preserve the real relationship between the variables or the time series.

Usually, for generalized linear models (linear regression model, logistic regression model, among others), where $n$ data owners observe the same features, i.e. data is split by records as illustrated in Figure~\ref{fig:datasplit_}, each data owner $A_i,  i = 1,...,n$, can individually multiply their covariate matrix $\mathbf Z^r_{A_i}\in \mathbb{R}^{{T_{A_i}\times M}}$ and target variable $\mathbf Y^r_{A_i} \in \mathbb{R}^{T_{A_i}\times N}$ by a random matrix $\mathbf M_{A_i} \in \mathbb{R}^{k\times T_{A_i}}$ (with a jointly defined $k$ value), providing $\mathbf M_{A_i} \mathbf Z^r_{A_i},\allowbreak \mathbf M_{A_i} \mathbf Y^r_{A_i}$ to the competitors~\citep{mangasarian2012privacy,yu2008privacy}, which allows pre-multiplying the original data{,} 
\begin{equation*}
\mathbf Z^r=
\left[\begin{array}{c}
\mathbf Z^r_{A_1}\\
\vdots\\
\mathbf Z^r_{A_n}\\
\end{array}
\right] \text{ and }
\mathbf Y^r=
\left[\begin{array}{c}
\mathbf Y^r_{A_1}\\
\vdots\\
\mathbf Y^r_{A_n}\\
\end{array}
\right]{\ ,} 
\end{equation*}
by $\mathbf M=\allowbreak [\mathbf M_{A_1},\allowbreak \dots, \allowbreak \mathbf M_{A_n}]$, since 
\begin{equation}
\mathbf{MZ}^r=\mathbf{M}_{A_1} \mathbf{Z}^r_{A_1}+\dots+ \mathbf{M}_{A_n} \mathbf{Z}^r_{A_n} \ .
\end{equation}
The same holds for the multiplication $\mathbf{MY}^r$, $\mathbf{M} \in \mathbb{R}^{k \times \sum_{i=1}^n T_{A_i}}, \allowbreak \mathbf Z^r \in \mathbb{R}^{\sum_{i=1}^n T_{A_i}\times M}, \allowbreak \mathbf Y^r \in \mathbb{R}^{\sum_{i=1}^n T_{A_i}\times N}$.
{
This definition of $\mathbf M$ is possible because when multiplying $\mathbf M$ and $\mathbf Z^r$, the $j$-th column of $\mathbf M$ only multiplies the $j$-th row of $\mathbf Z^r$.} For some statistical learning algorithms, a property of such matrix is the orthogonality, i.e. $\mathbf M^{-1}=\mathbf M\tran$.
Model fitting is then performed with this new representation of the data, which preserves the solution to the problem. This is true in the linear regression model because the {multivariate} least squares {estimate} for the linear regression model with covariate matrix $\mathbf{MZ}^r$ and target variable $\mathbf{MY}^r$ is 
\begin{equation}
\hat{\mathbf{B}}_\text{LS}= \left((\mathbf{Z}^r)\tran\mathbf{Z}^r\right)^{-1}\left((\mathbf{Z}^r)\tran\mathbf{Y}^r\right) \ , 
\end{equation}
which is also the multivariate least squares {estimate} for the coefficients of a linear regression considering data matrices $\mathbf{Z}^r$ and $\mathbf{Y}^r$, respectively. Despite this property, the application in LASSO regression does not guarantee that the sparsity of the coefficients is preserved and a careful analysis is necessary to ensure the correct estimation of the model~\citep{zhou2009compressed}. \cite{liu2008survey} discuss attacks based on prior knowledge, in which a data owner estimates $\mathbf M$ by knowing a small collection of original data records.
Furthermore, when considering the {linear regression} model for which $\mathbf Z =[\mathbf Z_{A_1}, \dots, \mathbf Z_{A_n}]$ and $\mathbf Y =[\mathbf Y_{A_1}, \dots, \mathbf Y_{A_n}]$, {i.e. data is split by features,} it is not possible to define a matrix $\mathbf M^*=[\mathbf{M}^*_{A_1},\dots, \mathbf{M}^*_{A_n}]\in \mathbb{R}^{k \times T}$ and then privately compute $\mathbf{M^*Z}$ and $\mathbf{M^*Y}$, because as explained, the $j$-th column of $\mathbf M^*$ multiplies the $j$-th row of $\mathbf Z$, which, in this case, consists of data coming from different owners.

Similarly, if the data owners observe different features, a linear programming problem can be solved in a way that each {individual} data owner multiplies their data $\mathbf X_{A_i} \in \mathbb{R}^{T\times M_{A_i}}$ by a private random matrix $\mathbf N_{A_i} \in \mathbb{R}^{M_{A_i}\times s}$ (with a jointly defined value $s$) and, then, shares $\mathbf X_{A_i} \mathbf N_{A_i}$~\citep{mangasarian2011privacy}, $i=1,...,n$, which is equivalent to post-multiplying the original dataset $\mathbf X=\allowbreak[\mathbf X_{A_1},\allowbreak...,\allowbreak \mathbf X_{A_n}]$ by $\mathbf N=\allowbreak[\mathbf N\tran_{A_1},\allowbreak \dots, \allowbreak\mathbf N\tran_{A_n}]\tran$, which {represents} the joining of $\mathbf N_{A_i}, i=1,\dots,n$, through row-wise operation. However, the obtained solution is in a different space, and {it} needs to be recovered by multiplying it by the corresponding $\mathbf N_{A_i}, i=1,..., n$. 
For the {linear regression}, which models the relationship between the covariates $\mathbf{Z}\in\mathbb{R}^{T\times M}$ and the target $\mathbf{Y}\in\mathbb{R}^{T\times N}$, this algorithm corresponds to solving a linear regression that models the relationship between $\mathbf{ZN_z}$ and $\mathbf{YN_y}$, i.e.\ {the solution is given by}
\begin{equation} \label{eq:post-var}
	\hat{\mathbf B}'_\text{LS}=\argmin_{\mathbf B} \left(\frac{1}{2}\|\mathbf{YN_y}-\mathbf{ZN_zB}\|_2^2\right),
\end{equation}
where $\mathbf{ZN_z}$ and $\mathbf{YN_y}$ are shared matrices. Two private matrices $\mathbf{N_z} \in \mathbb R^{M\times s}$, $\mathbf{N_y} \in \mathbb R^{N\times w}$ are required to transform the data, since the number of columns for $\mathbf Z$ and $\mathbf Y$ is different ($s$ and $w$ values are jointly defined). The problem is that the {multivariate} least squares {estimate} for~\eqref{eq:post-var} is {given by}
\begin{equation}\label{eq:post_multipication}
    \hat{\mathbf B}'_\text{LS}=\Big((\mathbf{ZN_z})\tran(\mathbf{ZN_z})\Big)^{-1}\Big((\mathbf{ZN_z})\tran(\mathbf{YN_y})\Big)= (\mathbf{N_z})^{-1}\underbrace{(\mathbf{Z\tran Z})^{-1}\mathbf{Z\tran Y}}_\text{$=\argmin_{\mathbf B} \left(\frac{1}{2}\|\mathbf{Y}-\mathbf{ZB}\|_2^2\right)$}\mathbf{N_y} \ , 
\end{equation}
which implies that this transformation does not preserve the {coefficients} of the {linear regression considering data matrices $\mathbf{Z}$ and $\mathbf{Y}$, respectively, and therefore $\mathbf N_z$ and $\mathbf N_y$ would have to be shared.}

Generally, data transformation is performed through the generation of random matrices that pre- or post-multiply the private data. However, there are other techniques through which data is transformed with matrices defined according to the data itself, as is the case with Principal Component Analysis (PCA). PCA
is a widely used statistical procedure for reducing the dimension of data, by applying an orthogonal transformation that retains as much of the {data} variance as possible. Considering the matrix $\mathbf W \in \mathbb{R}^{M \times M}$ of the eigenvectors of the covariance matrix $\mathbf Z\tran \mathbf Z$, $\mathbf{Z}\in \mathbb{R}^{T\times M}$, PCA allows representing the data by $L$ variables performing $\mathbf Z \mathbf N_L$, where $\mathbf N_L$ are the first $L$ columns of $\mathbf W$, $L=1,..., M$.  For the data split by records, \cite{dwork2014analyze} suggest a differential private PCA, assuming that each data owner takes a random sample of the fitting records to form the covariate matrix. {In order to protect} the covariance matrix, one can add Gaussian noise to this matrix (determined without sensible data sharing), leading to the computation of the principal directions of the noisy covariance matrix. To finalize the process, the data owners multiply the sensible data by {said} principal directions  before feeding it into the model fitting. Nevertheless, the application to collaborative {linear regression with data split by features} would require sharing the data when computing the $\mathbf Z\tran \mathbf Z$ matrix, since $\mathbf Z\tran$ is divided by rows. Furthermore, as explained in~\eqref{eq:post-var} and~\eqref{eq:post_multipication}, it is difficult to recover the original {linear} regression model by performing the estimation of the coefficients using transformed covariates and target matrices, through post-multiplication by random matrices.

{Regarding} the data normalization techniques mentioned above, \cite{zhu2015privacy} assume that data owners mask their data by using $z$-score normalization, followed by the sum of random noise (from Uniform or Gaussian distributions), allowing a greater control on their data, which is {then} shared with a recommendation system {that} fits the model. However, the noise does not {meet} the differential privacy definition~(see~\ref{diff-privacy}). 

For data collected by different sensors (e.g., smart meters and mobile users) it is common to proceed to the aggregation of data through privacy-preserving techniques. For instance, by adding carefully calibrated Laplacian noise to each time series~\citep{fan2014adaptive,soria2017individual}. The addition of noise to the data is an appealing technique given its easy application. However, even if this noise meets the definition of differential privacy, there is no guarantee that the resulting model will perform well.

\subsection{Secure Multi-party Computation Protocols}\label{sec:SCM}

In secure multi-party computation, the intermediate calculations required by the fitting algorithms, which require data owners to jointly compute a function over their data, are performed through protocols for secure operations, such as matrix addition or multiplication (discussed in {S}ection~\ref{linear-algebra-protocols}). In these approaches, the encryption of the data occurs while fitting the model (discussed in {S}ection~\ref{sec:cryp}), instead of as a pre-processing step such as in the data transformation methods from the previous section.

\subsubsection{Linear Algebra-based Protocols}\label{linear-algebra-protocols}

The simplest secure multi-party computation protocols are based on linear algebra and address the problems where matrix operations with confidential data are necessary.
\cite{du2004privacy} propose  secure protocols for product $\mathbf A. \mathbf C$ and inverse of the sum $(\mathbf A+\mathbf C)^{-1}$, for any two private matrices $\mathbf A$ and $\mathbf C$ with appropriate dimensions. The aim is to fit a (ridge) linear regression between two data owners, who observe different covariates but share the target variable. Essentially, the $\mathbf A. \mathbf C$ protocol transforms the product of matrices, $\mathbf{A} \in \mathbb{R}^{m \times s}$, $\mathbf C \in \mathbb{R}^{s \times k}$, into a sum of matrices, $\mathbf V_a+\allowbreak\mathbf V_c$, which are equally secret, $\mathbf V_a, \mathbf V_c \in \mathbb{R}^{m \times k}$. However, since the estimate of the coefficients for linear regression with covariate matrix $\mathbf Z {\in \mathbb{R}^{T\times M}}$ and target {matrix} $\mathbf Y {\in \mathbb{R}^{T \times N}}$ is 
\begin{equation}
\hat{ \mathbf B}_\text{LS}=(\mathbf Z\tran \mathbf Z)^{-1} \mathbf Z\tran \mathbf Y \ ,    
\end{equation} 
the $\mathbf{A.C}$ protocol is used to perform the computation of $\mathbf{V}_a, \mathbf{V}_c$ such that 
\begin{equation}
    \mathbf V_a +\allowbreak\mathbf V_c = (\mathbf{Z\tran Z}) \ ,
\end{equation} 
{which requires} the definition of an $(\mathbf A+\mathbf C)^{-1}$ protocol to compute 
\begin{equation}
    (\mathbf{Z\tran Z} )^{-1}=(\mathbf V_a +\allowbreak\mathbf V_c)^{-1}.
\end{equation} 

For the $\mathbf A.\mathbf C$ protocol, $\mathbf{A} \in \mathbb{R}^{m \times s}$, $\mathbf C \in \mathbb{R}^{s \times k}$, {there are} two different formulations, according to the existence, or not, of a third entity. {In cases where only two data owners perform the protocol}, a random matrix $\mathbf M\in \mathbb{R}^{s\times s}$ is jointly generated and the $\mathbf A. \mathbf C$ protocol achieves the following results, by dividing the $\mathbf M$ and $\mathbf M^{-1}$ into two matrices with the same dimensions 
\begin{align}
\mathbf{A}\mathbf C &= \mathbf{AMM^{-1}}\mathbf C = \mathbf A[\mathbf M_\text{left}, \mathbf M_\text{right}]
\left[\begin{tabular}{l}
$(\mathbf M^{-1})_\text{top}$ \\
$(\mathbf M^{-1})_\text{bottom}$
\end{tabular}\right]
\mathbf C \\
& = \mathbf{AM}_\text{left}(\mathbf M^{-1})_\text{top}\mathbf C + \mathbf{AM}_\text{right}(\mathbf M^{-1})_\text{bottom}\mathbf C \ ,
\end{align}
where $\mathbf M_\text{left}$ and $\mathbf M_\text{right}$ {represent} the left and right part of $\mathbf M$, and $(\mathbf M^{-1})_\text{top}$ and $(\mathbf M^{-1})_\text{bottom}$ {designate} the top and bottom part of $\mathbf M^{-1}$, respectively.
In this case, 
\begin{equation}\label{eq:VaProtocol1}
    \mathbf V_a =\mathbf{AM}_\text{left} (\mathbf M^{-1})_\text{top}\mathbf C \ ,
\end{equation}
is derived by the first data owner, and 
\begin{equation}\label{eq:VcProtocol1}
    \mathbf V_c=\allowbreak \mathbf{AM}_\text{right}\allowbreak  (\mathbf M^{-1})_\text{bottom}\mathbf C \ ,
\end{equation} 
by the second one. Otherwise, a third entity is assumed to generate random matrices $\mathbf R_a,\mathbf r_a$ and $\mathbf R_c,\mathbf r_c$, such that 
\begin{equation}
    \mathbf r_a + \mathbf r_c =\mathbf R_a \mathbf R_c \ ,
\end{equation} 
which are sent to the first and second data owners, respectively, $\mathbf R_a \in \mathbb{R}^{m\times s}$, $\mathbf R_c \in \mathbb{R}^{s\times k}$, $\mathbf r_a, \mathbf r_c \in \mathbb{R}^{m\times k}$. In this case, the data owners start by trading the matrices $\mathbf{A+R}_a$ and $\mathbf{C+ R}_c$, then the second data owner randomly generates a matrix $\mathbf V_c$ and sends 
\begin{equation}
    \mathbf T=(\mathbf{A+R}_a)\mathbf C+(\mathbf r_c-\mathbf V_c) \ ,
\end{equation}
to the first data owner, in such a way that, at the end of the $\mathbf{A.C}$ protocol, the first data owner keeps the information 
\begin{equation}
    \mathbf V_a=\mathbf{T+r}_a-\mathbf R_a(\mathbf C+\mathbf{R}_c) \ ,
\end{equation} 
and the second keeps $\mathbf V_c$ (since the sum of $\mathbf V_a$ with $\allowbreak \mathbf V_c$ is $\allowbreak\mathbf{A}\mathbf C$). 

Finally, the $\mathbf{(A+C)^{-1}}$ protocol considers two steps, where $\mathbf{A}, \mathbf C \in \mathbb{R}^{m\times k}$. {Initially}, {the} matrix $\mathbf{(A + C)}$ is jointly converted to $\mathbf{P(A + C)Q}$ using two random matrices, $\mathbf P$ and $\mathbf Q$, {which} are only known to the second data owner to prevent the first one from learning matrix $\mathbf C$, $\mathbf P \in \mathbb{R}^{r\times m}, \mathbf Q \in \mathbb{R}^{k\times t}$. The results of $\mathbf{P(A+C)Q}$ are known only by the first data owner who can conduct the inverse computation $\mathbf{Q}^{-1}(\mathbf{A}+\mathbf C)^{-1}\mathbf{P}^{-1}$. In the {following} step, the data owners jointly remove $\mathbf Q^{-1}$ and $\mathbf P^{-1}$ and get $\mathbf{(A + C)^{-1}}$. Both steps {can be} achieved by applying the $\mathbf{A.}\mathbf C$ protocol. Although these protocols prove to be an efficient technique to solve problems with a shared target variable, {one cannot say the same when} $\mathbf Y$ is private, as {further elaborated} in {S}ubsection~\ref{subsec:emp_analysis}.

Another example of secure protocols for producing private matrices can be found in~\cite{karr2009privacy}; they are applied to data from multiple owners who observe different covariates and target features -- which are also assumed to be secret. The proposed protocol allows two data owners, with correspondent data matrix $\mathbf A$ and $\mathbf C$, $\mathbf{A} \in \mathbb{R}^{m \times k}$, $\mathbf C \in \mathbb{R}^{m \times s}$, to perform the multiplication $\mathbf{A\tran C}$ by: (i)~first data owner generates $\mathbf W=[\mathbf w_1,....,\mathbf w_g]$, $\mathbf{W} \in \mathbb{R}^{m\times g}$, such that 
\begin{equation}
    \mathbf w\tran_i \mathbf A_j=\mathbf 0 \ , 
\end{equation}
where $\mathbf A_j$ is the $j$-th column of $\mathbf A$ matrix, $i=1,...,g$ and $j=1,...,k$, and {then} sends $\mathbf W$ to the second owner; (ii)~the second data owner computes $\mathbf{(I-WW\tran)}\mathbf C$ and shares it, and (iii)~the first data owner performs
\begin{equation}
    \mathbf{A\tran(I-WW\tran)C}{=\mathbf{A\tran C}-\underbrace{\mathbf{A\tran WW\tran C}}_{=\mathbf 0 \text{, since } \mathbf{A\tran W=\mathbf 0}  }}=\mathbf{A\tran C} \ ,
\end{equation} 
without the possibility of recovering $\mathbf C$, since the $rank(\mathbf{(I-WW\tran)\mathbf C})=m-g$. To generate $\mathbf W$, \cite{karr2009privacy} suggest selecting $g$ columns from the $\mathbf Q$ matrix, computed by $\mathbf{QR}$ decomposition of the private matrix $\mathbf C$, {and} excluding the first $k$ columns. 
{Furthermore, the authors define the} optimal value for $g$ {according to} the number of linearly independent equations (represented by NLIE) on the other data owner's data. The second data owner obtains $\mathbf A\tran \mathbf C$ (providing $k s$ values{, since $\mathbf{A\tran C}\in \mathbb{R}^{k\times s}$}) and receives $\mathbf W$, knowing that $\mathbf A\tran \mathbf W = 0$ (which contains $k g$ values), i.e.
\begin{equation}
\text{NLIE(Owner\#1)} = ks+kg.
\end{equation}
Similarly, the first data owner receives $\mathbf A\tran \mathbf C$ (providing $k s$ values) and $\mathbf{(I-WW\tran)C}$ (providing $s(m-g)$ values since $\mathbf{(I-WW\tran)C}\in \mathbb{R}^{m\times s}$ and $rank(\mathbf W)=m-g$), i.e.
\begin{equation}
\text{NLIE(Owner\#2)} = ks+s(m-g).
\end{equation}
\cite{karr2009privacy} determines the optimal value for $g$ by assuming that both data owners equally share NLIE, so that none of the agents benefit from the order assumed when running the protocol, i.e.
\begin{equation}
|\text{NLIE(Owner\#1)}-\text{LP(Owner\#2)}|=0\ ,
\end{equation}
which allows to obtain the optimal value $g^*=\frac{sm}{k+s}$. 

An advantage {to} this approach, when compared to the one proposed by~\cite{du2004privacy}, is that $\mathbf W$ is simply generated by the first data owner, while the invertible matrix $\mathbf M$ proposed by~\cite{du2004privacy} needs to be agreed upon by both parties, which entails substantial communication costs when the number of records is high. 

\subsubsection{Homomorphic Cryptography-based Protocols}\label{sec:cryp}

The use of homomorphic encryption was successfully introduced in model fitting and it works by encrypting the original values in such a way that the application of arithmetic operations in the public space does not compromise the encryption. Homomorphic encryption ensures that, after the decryption stage (in the private space), the resulting values correspond to the ones obtained by operating on the original data. Consequently, homomorphic encryption is especially {responsive} and engaging to privacy-preserving applications. As an example, the Paillier homomorphic encryption scheme defines that (i)~two integer values encrypted with the same public key may be multiplied together to give encryption of the sum of the values, and (ii)~an encrypted value may be taken to some power, yielding encryption of the product of the values. \cite{hall2011secure} proposed  a secure protocol for summing and multiplying real numbers by extending the Paillier encryption, aiming to perform matrix products required to solve linear regression, for data divided by features or records.

Equally based in Paillier encryption, the work of~\cite{nikolaenko2013privacy} introduces {two parties that correctly perform their tasks without teaming up to discover private data}: a crypto-service provider (i.e., a party that provides software or hardware-based encryption and decryption services) and an evaluator {(i.e., a party who runs the learning algorithm)}, in order to perform a secure linear regression for data split by records. Similarly, \cite{chen2018privacy} use Paillier and ElGamal encryptions to fit the coefficients of ridge {linear} regression, also including these entities. In both works, the use of the crypto-service provider is {prompted} by assuming that the evaluator does not corrupt its computation in producing an incorrect result. 
Two conditions are required to ensure that there will be no confidentiality breaches: the crypto-service provider must publish the system keys correctly, and there can be no collusion between the evaluator and the crypto-service provider. 
The data could be reconstructed if the crypto-service provider supplies correct keys to a curious evaluator. 
For data divided by features, the work of~\cite{gascon2017privacy} extends the approach of~\cite{nikolaenko2013privacy} by designing a secure multi/two-party inner product.

{\cite{jia2018preserving} explore a privacy-preserving data classification scheme with a support vector machine, thus ensuring} that the data owners can successfully conduct data classification without exposing their learned models to the ``tester'', while the ``testers'' keep their data private. For example, a hospital (owner) can create a model to learn the relation between a set of features and the existence of a disease, and another hospital (tester) can use this model to obtain a forecasting value, without any knowledge about the model. The method is supported by cryptography-based protocols for secure computation of multivariate polynomial functions but, unfortunately, this only works for data split by records.

{\cite{li2012efficient} addresses the privacy-preserving computation of the sum and the minimum of multiple time series collected by different data owners, by combining homomorphic encryption and a novel key management technique to support large data dimensions. These statistics with a privacy-preserving solution for individual user data are quite useful to explore mobile sensing in different applications such as environmental monitoring (e.g., average level of air pollution in an area), traffic monitoring (e.g., highest moving speed during rush hour), healthcare (e.g., number of users infected by a flu), etc.} \cite{liu2018practical} and \cite{li2018ppma} explored similar approaches, based on Paillier or ElGamal encryption, namely concerning the application in smart grids. However, the estimation of models such as the linear regression model also requires protocols for the secure product of matrices.

Homomorphic cryptography was further explored to solve secure linear programming problems through intermediate steps of the simplex method, which optimizes the problem by using slack variables, tableaus, and pivot variables~\citep{de2012design}. The author observed that the proposed protocols are still unviable to solve linear programming problems, having numerous variables and constraints, which are quite reasonable in practice.

\cite{aono2017input} perform a combination of homomorphic cryptography with differential privacy, in order to deal with data split by records. Summarily, if data is split by records, as illustrated in Figure~\ref{fig:datasplit_}, each $i$-th data owner observes the covariates $\mathbf Z^r_{A_i}$ and target variable $\mathbf Y^r_{A_i}$, $\mathbf Z^r_{A_i} \in \mathbb{R}^{T_{A_i} \times M}, \mathbf Y^r_{A_i} \in \mathbb{R}^{T_{A_i} \times N},  i=1,...,n$. Then $(\mathbf Z^r_{A_i})\tran \mathbf Z^r_{A_i}$ and $(\mathbf Z^r_{A_i})\tran \mathbf Y^r_{A_i}$ are computed and Laplacian noise is added to them. This information is encrypted and sent to the cloud server, which works on the encrypted domain, summing all the matrices received. Finally, the server provides the result of this sum to a client who decrypts it and obtains relevant information to perform the linear regression, i.e.\ $\sum_{i=1}^n( \mathbf Z^r_{A_i})\tran \mathbf Z^r_{A_i}$, 
$\sum_{i=1}^n (\mathbf Z^r_{A_i})\tran \mathbf Y^r_{A_i}$, etc. However, noise addition can result in a poor estimation of the coefficients, limiting the performance of the model. Furthermore, this is not valid when data is divided by features{, because} $\mathbf Z\tran \mathbf Z \neq \sum_{i=1}^n \mathbf Z\tran_{A_i} \mathbf Z_{A_i}$ and $\mathbf Z\tran \mathbf Y \neq \sum_{i=1}^n \mathbf Z\tran_{A_i} \mathbf Y_{A_i}$.

In summary, the cryptography-based methods are usually robust to confidentiality breaches but may require a third party for keys generation, as well as external entities to perform the computations in the encrypted domain. Furthermore, the high computational complexity is a challenge when dealing with real applications~\citep{de2012design, zhao2019secure, tran2019privacy}.

\subsection{Decomposition-based Methods}
\label{sec:decomp}

In decomposition-based methods, problems are solved by breaking them up into smaller sub-problems and solving each separately, either in parallel or in sequence. Consequently, private data is naturally distributed between the data owners. However, this natural division requires sharing of intermediate information. {For that reason}, some approaches combine decomposition-based methods with data transformation or homomorphic cryptography-based methods; but {in this paper's case}, a special {emphasis} will be given to these methods in separate.

\subsubsection{ADMM Method}\label{sec:admm}
The ADMM is a powerful algorithm that circumvents problems without a closed-form solution, such as the LASSO regression, and it has proved to be efficient and well suited for distributed convex optimization, in particular for large-scale statistical problems~\citep{boyd2011distributed}. Let $E$ be a convex forecast error function, between the true values $\mathbf Y$ and the forecasted values given by the model $\mathbf{\hat Y}=f(\mathbf B,\mathbf Z)$ using a set of covariates $\mathbf Z$ and coefficients $\mathbf B$, and $R$ a convex regularization function.
The ADMM method~\citep{boyd2011distributed} {solves the optimization problem} 
\begin{equation}
\min_{\mathbf B} E(\mathbf B)+R(\mathbf B){\ ,}
\end{equation}
by splitting $\mathbf B$ into two variables ($\mathbf B$ and $\mathbf H$),
\begin{equation}
\min_{\mathbf B , \mathbf H} E(\mathbf B)+R(\mathbf H)
\text{ subject to } \mathbf A \mathbf B+\mathbf C \mathbf H=\mathbf D{\ ,}\\
\end{equation}
and using the corresponding augmented Lagrangian function formulated with dual variable $\mathbf U$, 
\begin{equation}
    L(\mathbf B,\mathbf H,\mathbf U) = E(\mathbf B)+R(\mathbf H)+\mathbf U\tran(\mathbf A\mathbf B\mathbf +\mathbf C\mathbf H-\mathbf D)+\frac{\rho}{2}\|\mathbf A\mathbf B+\mathbf C\mathbf H-\mathbf D\|_2^2.
\end{equation}
The quadratic term $\frac{\rho}{2}\|\mathbf A\mathbf B+\mathbf C\mathbf H-\mathbf D\|_2^2$ provides theoretical convergence guarantees because it is strongly convex. This implies mild assumptions on the objective function. Even if the original objective function is convex, augmented Lagrangian is strictly convex (in some cases strongly convex) \citep{boyd2011distributed}.

The ADMM solution is estimated by the following iterative system,
\begin{equation} \label{ADMM} \left
		\{
            \begin{array}{l}
              \displaystyle 
              \mathbf B^{k+1}:=\argmin_{\mathbf B} L (\mathbf B,\mathbf H^k,\mathbf U^k)  \\
              \displaystyle 
              \mathbf H^{k+1}:=\argmin_{\mathbf H}  L(\mathbf B^{k+1},\mathbf H,\mathbf U^k) \\
              \mathbf U^{k+1}:= \mathbf U^k+\rho (\mathbf A\mathbf B^{k+1}+\mathbf C\mathbf H^{k+1}-\mathbf D).
            \end{array}
          \right.
\end{equation}

For data split by records, the consensus problem splits primal variables $\mathbf B$ and separately optimizes the decomposable cost function $E(\mathbf B)=\sum_{i=1}^n E_i(\mathbf B_{A_i})$ for all the data owners under the global consensus constraints. Considering that the sub-matrix $\mathbf Z_{A_i}^r \in \mathbb R^{T_{A_i} \times M}$ of $\mathbf Z \in \mathbb R^{T \times M}$ corresponds to the local data of the $i-$th data owner, the  coefficients $\mathbf B_{A_i} \in \mathbb R^{M \times N}$ {are given by}
\begin{align} \label{ADMMconsensus}
\begin{split}
\argmin_{\boldsymbol{\Gamma}} & \sum_i E_i(\mathbf B_{A_i})+R(\mathbf H) \\
\text{ s.t. } & {\mathbf B_{A_1}- \mathbf H = \mathbf 0, \ \mathbf B_{A_2}- \mathbf H = \mathbf 0, \dots , \
\mathbf B_{A_n}- \mathbf H = \mathbf 0 \ ,}\\
\end{split}
\end{align}
{where $\boldsymbol \Gamma = \{\mathbf B_{A_1}, \dots,\mathbf B_{A_n} , \mathbf H\}$}. In this case, $E_i(\mathbf B_{A_i})$ measures the error between the true values $\mathbf Y_{A_i}^r$ and the forecasted values given by the model $\mathbf{\hat Y}_{A_i}=f(\mathbf B_{A_i},\mathbf Z_{A_i}^r)$.

For data split by features, the sharing problem splits $\mathbf Z$ into $\mathbf Z_{A_i} \in \mathbb R^{T \times M_{A_i}}$, and $\mathbf B$ into $\mathbf B_{A_i} \in \mathbb R^{M_{A_i} \times N}$. Auxiliary $\mathbf H_{A_i} \in \mathbb R^{T \times N}$ are introduced for the $i$-th data owner based on $\mathbf Z_{A_i}$ and $\mathbf B_{A_i}$. In such case, the sharing problem is formulated based on the decomposable cost function $E(\mathbf B)= E(\sum_{i=1}^n \mathbf B_{A_i})$ and $R(\mathbf B)=\sum_{i=1}^n R(\mathbf B_{A_i})$. Then, $\mathbf B_{A_i}$ are {given by}
\begin{align} \label{ADMMsharing}
\begin{split}
\argmin_{\boldsymbol\Gamma'} \ & E(\sum_i \mathbf H_{A_i})+\sum_i R(\mathbf B_{A_i})\\
\quad\text{s.t. } &{\mathbf Z_{A_1} \mathbf B_{A_1}- \mathbf H_{A_1} = \mathbf 0, \ \mathbf Z_{A_2} \mathbf B_{A_2}- \mathbf H_{A_2} = \mathbf 0, \dots, \ \mathbf Z_{A_n} \mathbf B_{A_n}- \mathbf H_{A_n} = \mathbf 0 }\ ,\\
\end{split}
\end{align}
{where $\boldsymbol\Gamma'=\{\mathbf B_{A_1}, \dots,\mathbf B_{A_n} , \mathbf H_{A_1}, \dots,\mathbf H_{A_n}\}$}. In this case, $E(\sum_{i=1}^n \mathbf H_{A_i})$ is related to the error between the true values $\mathbf Y$ and the forecasted values given by the model $\mathbf{\hat Y}=\sum_{i=1}^nf(\mathbf B_{A_i},\mathbf Z_{A_i})$.


Undeniably, ADMM provides a desirable formulation for parallel computing~\citep{dai2018privacy}. However, it is not possible to ensure continuous privacy, since ADMM requires intermediate calculations, allowing the most curious competitors to recover the data at the end of some iterations by solving non-linear equation systems~\citep{bessa2018data}. An ADMM-based distributed LASSO algorithm, in which each data owner only communicates with its neighbor to protect data privacy, is described in~\cite{mateos2010distributed}, with applications in signal processing and wireless communications. Unfortunately, this approach is only valid for the cases where data is distributed by records.

The concept of differential privacy was also explored in ADMM by introducing randomization when computing the primal variables, i.e.\  during the iterative process, each data owner estimates the corresponding coefficients and perturbs them by adding random noise~\citep{zhang2017dynamic}. However, these local randomization mechanisms can result in a non-convergent algorithm with poor performance even under moderate privacy guarantees. To address these concerns, \cite{huang2018dp} use an approximate augmented Lagrangian function and Gaussian mechanisms with time-varying variance. Nevertheless, noise addition is not {enough} to guarantee privacy, as a competitor can potentially use the results from all iterations to perform inference~\citep{zhang2018recycled}. 

\cite{zhang2019admm} have recently combined a variant of ADMM with homomorphic encryption namely in cases where data is divided by records. 
As referred by the authors, the incorporation of the proposed mechanism in decentralized optimization under data divided by features is {quite} difficult. {Whereas the algorithm} for data split by records the algorithm only requires sharing the coefficients, the exchange of coefficients in data split by features is not enough, since each data owner observes different features. The division by features requires the local estimation of $\mathbf B_{A_i}^{k+1} \in \mathbb{R}^{M_{A_i}\times N}$ by using {information related to} $\mathbf Z_{A_j} \mathbf B^k_{A_j}$, and $\mathbf Y$, meaning that, for each new iteration, an $i-$th data owner shares $T N$ new values, instead of $M_{A_i}N$ (from $\mathbf B_{A_i}^k$), $i,j=1,...,n$.

For data split by features, {\cite{zhang2018distributed} propose} a probabilistic forecasting method combining ridge linear quantile regression and ADMM. The output is a set of quantiles instead of a unique value (usually the expected value). {In this case,} the ADMM {is} applied to split the corresponding optimization problem into sub-problems, which are solved by each data owner, assuming that all the data owners communicate with a central node in an iterative process, providing intermediate results instead of private data. In fact, the authors claimed that the paper describes how wind power probabilistic forecasting with off-site information could be achieved in a privacy-preserving and distributed fashion. However, the authors did not conduct an in-depth analysis of the method, as will be shown in {S}ection~\ref{sec:background}. Furthermore, this method assumes that the {central node knows the target matrix}.

\subsubsection{Newton-Raphson Method}
ADMM is becoming a standard technique in recent {research} about distributed computing in statistical learning, but it is not the only one. For generalized linear models, the distributed optimization for model's fitting has been efficiently achieved through the Newton-Raphson method, which minimizes {a twice differentiable forecast error function $E$, between the true values $\mathbf Y$ and the forecasted values given by the model $\mathbf{\hat Y}=f(\mathbf B,\mathbf Z)$ using a set of covariates $\mathbf Z$, including lags of $\mathbf Y$. { $\mathbf B$  is the coefficient matrix, which is updated iteratively. The estimate for $\mathbf B$ at iteration $k$, represented by $\mathbf B^k$, is given by}
\begin{equation}
\mathbf B^{k+1}=\mathbf  B^k-(\nabla^2E(\mathbf B^k))^{-1}\nabla E(\mathbf B^k) \ ,
\end{equation}
where $\nabla E$ and $\nabla^2 E$ are the gradient and Hessian of $E$, respectively. {With certain properties, convergence to a certain global minima can be guaranteed~\citep{nocedal2006numerical}: (i)~$\nabla E(\mathbf B)$ is continuously differentiable, and (ii)~$\nabla^2 E(\mathbf B) \nabla E(\mathbf B)$ is convex.}

In order to enable distributed optimization,  $\nabla E$ and $\nabla^2 E$  are required to be decomposable over multiple data owners, {i.e. these functions can be rewritten as the sum of functions that depend {exclusively} on local data from each data owner.}} {\cite{slavkovic2007secure} proposes} a secure logistic regression approach for data {split} by records and features by using secure {multi-party} computation protocols during the Newton-Raphson method iterations. However, and although distributed computing is feasible, there is no sufficient guarantee of data privacy, because it is an iterative process{; even} though an iteration cannot reveal private information, sufficient iterations can: in a logistic regression with data split by features, for each iteration $ k $ the data owners exchange the matrix $\mathbf Z_{A_i} \mathbf B_{A_i}^k$, making it possible to recover the local data $\mathbf Z_{A_i}$ at the end of some iterations~\citep{fienberg2009valid}.

An example of an earlier promising work -- combining logistic regression with the Newton-Raphson method for data distributed by records -- was the Grid binary LOgistic REgression (GLORE) framework~\citep{wu2012g}. The GLORE model is based on model sharing instead of patient-level data, which has motivated {subsequent} improvements, some of which continue to suffer from confidentiality breaches on intermediate results and
other ones resorting to protocols for matrix addition and multiplication. Later, \cite{li2015vertical} explored the issue concerning Newton-Raphson over data distributed by features, considering the existence of a server -- that receives the transformed data and computes the intermediate results, returning them to each data owner. In order to avoid {the} disclosure of local data while obtaining an accurate global solution, the authors apply the kernel trick to obtain the global linear matrix, computed using dot products of local records ($\mathbf Z_{A_i}\mathbf Z_{A_i}\tran$), which can be used to solve the dual problem for logistic regression. However, the authors identified a technical challenge in scaling up the model when the sample size is large, since each record requires a parameter.

\subsubsection{Gradient-Descent Methods}

Different gradient-descent methods have also been explored, aiming to minimize a forecast error function $E$, between the true values $\mathbf Y$ and the forecasted values given by the model $\mathbf{\hat Y}=f(\mathbf B,\mathbf Z)$ using a set of covariates $\mathbf Z$, including lags of $\mathbf Y$. The coefficient matrix $\mathbf B$ is updated iteratively such that {the estimate at} iteration $k$, {$\mathbf B^k$, is given by}
\begin{equation}
\mathbf B^k = \mathbf B^{k-1}+\eta \nabla E(\mathbf B^{k-1})\ ,
\end{equation}
where $\eta$ is the learning rate; it allows the parallel computation {when} the optimization function $E$ is decomposable. A common error function is the {multivariate} least squared error, 
\begin{equation}
E(\mathbf B)=\frac{1}{2}\|\mathbf Y-f(\mathbf B, \mathbf Z)\|^2.
\end{equation}
With certain properties, convergence to a certain global minima can be guaranteed~\citep{nesterov1998introductory}: (i)~$E$ is convex, (ii)~$\nabla E$ is Lipschitz continuous with constant $L$, i.e. for any $\mathbf F$, $\mathbf G$, 
\begin{equation}
\|\nabla E(\mathbf F)-\nabla E(\mathbf G)\|^2\leq L\|\mathbf F-\mathbf G\|^2\ ,
\end{equation}
and (iii)~$\eta\leq 1/L$.

\cite{han2010privacy} proposed a privacy-preserving linear regression for data distributed over features (with shared $\mathbf Y$) by combining distributed gradient-descent with secure protocols, based on pre- or post-multiplication of the data by random private matrices. In the work of~\cite{song2013stochastic}, differential privacy is introduced by adding random noise $\mathbf W$ in the $\mathbf B$ updates, {}
\begin{equation}
\mathbf B^k = \mathbf B^{k-1}+\eta \Big(\nabla E(\mathbf B^{k-1})+ \mathbf W\Big){.}
\end{equation}
{When this iterative process uses a few samples (or even a single sample) randomly selected, rather than the entire data, the process is known as \textit{stochastic} gradient descent (SGD). The authors argue that the trade-off between performance and privacy is most pronounced when smaller batches are used.}


\section{Collaborative Forecasting with VAR: Privacy Analysis}\label{sec:background}

This section presents a privacy analysis focused on the VAR model, {a model for the analysis of multivariate time series} and collaborative forecasting. It is not only used for forecasting tasks in different domains (and with significant improvements over univariate autoregressive models), but also for structural inference where the {main} objective is exploring certain assumptions about the causal structure of the data~\citep{toda1993vector}. A variant with the Least Absolute Shrinkage and Selection Operator (LASSO) regularization is also covered. The critical evaluation of the methods described in {S}ection~\ref{sec:privacy_methods} is conducted from a mathematical and numerical point of view in {S}ection~\ref{sec:privacy_analysis_var}. The solar energy time series dataset and R scripts are published in the online appendages to this paper (see~\ref{app:data_code}).

\subsection{VAR Model Formulation} \label{sec:admmVARLASSO}

Let $\{\mathbf{y}_t\}_{t=1}^T$ be an $n$-dimensional multivariate time series, where $n$ is the number of data owners. Then, $\{\mathbf{y}_t\}_{t=1}^T$ follows a VAR model with $p$ lags, {represented} as $\text{VAR}_{n}(p)$,  {when} the following relationship holds
\begin{equation} \label{eq:VAR}
\mathbf y_t = \boldsymbol \eta + \sum_{\ell=1}^p  \mathbf y_{t-\ell}\mathbf B^{(\ell)}  + \boldsymbol \varepsilon_t \ ,
\end{equation}
for $t = 1, \dots, T$, 
where $\boldsymbol\eta = [\eta_{1},\dots,\eta_{n}]$ is the constant intercept (row) vector, $\boldsymbol\eta \in \mathbb{R}^{n}$; $\mathbf{B}^{(\ell)}$ represents the coefficient matrix at lag $\ell=1,...,p$, $\mathbf{B}^{(\ell)} \in \mathbb{R}^{n\times n}$, and the coefficient associated with lag $\ell$ of time series $i$ (to estimate time series $j$) is {positioned at} $(i,j)$ of $\mathbf{B}^{(\ell)}$, for $i,j=1,...,n$;
 and $\boldsymbol\varepsilon_t = [\varepsilon_{1,t},\allowbreak\dots,\allowbreak\varepsilon_{n,t}]$, $\boldsymbol\varepsilon_t \in \mathbb{R}^{n}$, indicates a white noise vector that is independent and identically distributed with mean zero and nonsingular covariance matrix. By simplification, $\mathbf y_t$ is assumed to follow a centered process, $\boldsymbol{\eta} =\mathbf{0}$, i.e., as a vector of zeros of appropriate dimension. A compact representation of a $\text{VAR}_n(p)$ model reads as
 \begin{equation}
     \mathbf Y=\mathbf{ZB+ E} \ ,
 \end{equation}
 where 
\begin{equation*}
\mathbf Y = \left[ 
\begin{tabular}{l}
$\mathbf y_1$\\ 
\vdots\\
$\mathbf y_T$
\end{tabular}\right], \quad
\mathbf B = \left[ 
\begin{tabular}{l}
$\mathbf B^{(1)}$\\ 
\vdots\\
$\mathbf B^{(p)}$
\end{tabular}\right], \quad
\mathbf Z = \left[ 
\begin{tabular}{l}
$\mathbf z_1$\\ 
\vdots\\
$\mathbf z_T$
\end{tabular}\right], \text{ and } \, 
\mathbf E = \left[ 
\begin{tabular}{l}
$\boldsymbol{\varepsilon}_1$\\ 
\vdots\\
$\boldsymbol{\varepsilon}_T$
\end{tabular}\right]{,}
\end{equation*} 
are obtained by joining the vectors row-wise, and {defining}, respectively, the $T\times n$ response matrix, the $np\times n$ coefficient matrix, the $T\times np$ covariate matrix and the $T\times n$ error matrix, with $\mathbf z_t=[\mathbf{y}_{t-1},\allowbreak\dots,\allowbreak\mathbf{y}_{t-p}]$.

Notice that the VAR formulation adopted in this paper is not the usual $\mathbf{Y}\tran=\mathbf{B}\tran \mathbf{Z}\tran+ \mathbf{E}\tran$ because a large proportion of the literature in privacy-preserving techniques {derives} from the standard linear regression problem, in which each row is a {record} and each column is a feature.

Notwithstanding the high potential of the VAR model for collaborative forecasting, {namely} by linearly combining time series from the different data owners, data privacy or confidentiality issues might hinder this approach. For instance, renewable energy companies, competing in the same electricity market, will never share their electrical energy production data, even if this leads to a forecast error improvement in all individual forecasts.

For classical linear regression models, there are several techniques for estimating coefficients without sharing private information. However, in the VAR model, the data is divided by features (Figure~\ref{fig:datasplit})  and the variables to be forecasted are also covariates, which is challenging for privacy-preserving techniques ({especially} because it is also necessary to protect the data matrix $\mathbf Y$, as illustrated in Figure~\ref{fig:XZmatrices}). In the remaining of the paper, $\mathbf{Y}_{A_i}\in \mathbb{R}^{T \times 1}$ and $\mathbf{Z}_{A_i}\in \mathbb{R}^{T \times p}$  {represent} the target and covariate matrix for the $i$-th data owner, respectively, when defining a VAR model.  Therefore, the covariates and target matrices are obtained by joining the individual matrices column-wise, i.e. $\mathbf Z=[\mathbf Z_{A_1},\dots, \mathbf Z_{A_n}]$ and $\mathbf Y=[\mathbf Y_{A_1},\dots, \mathbf Y_{A_n}]$. For distributed computation, the coefficient matrix of data owner $i$ is denoted by $\mathbf B_{A_i}\in \mathbb{R}^{p\times n}, i=1,\dots,n$.

\begin{figure}
\centering
\includegraphics[width=\linewidth,trim={0 0 0 0cm},clip]{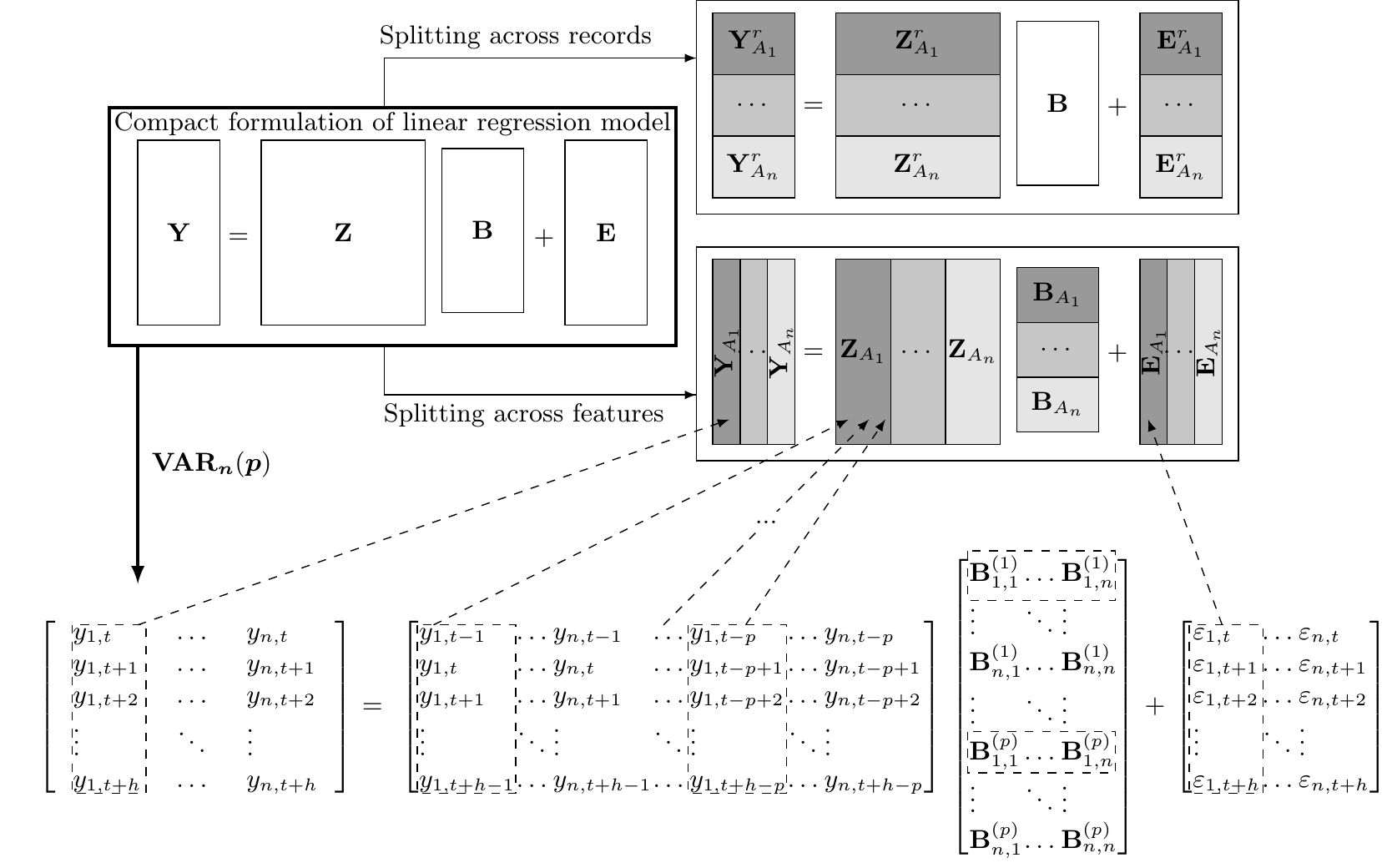}
\caption{Common data division structures and VAR model.}
\label{fig:datasplit}
\end{figure}

\begin{figure}
        \centering
\setlength{\tabcolsep}{3pt} 
        \begin{tabular}{c|cccccc}
            \multicolumn{1}{c}{$\overbrace{\rule{1.5cm}{0pt}}^{\text{Y of i-th data owner}}$}  &  \multicolumn{6}{c}{$\overbrace{\rule{6.4cm}{0pt}}^{\text{covariates values of i-th data owner}}$} \\
             $y_{i,t}$  &  $y_{i,t-1}$ &  $y_{i,t-2}$ &  $y_{t-3,i}$ &  $\dots$ &  $y_{i,t-p+1}$ &  $y_{i,t-p}$ \\
             $y_{i,t+1}$  & \color{gray} $y_{i,t}$ & \color{gray}$y_{i,t-1}$ & \color{gray} $y_{t-2,i}$ & \color{gray} $\dots$ & \color{gray} $y_{i,t-p+2}$ & \color{gray} $y_{i,t-p+1}$\\
             $y_{i,t+2}$  & \color{gray} $y_{i,t+1}$ & \color{gray} $y_{i,t}$ & \color{gray} $y_{t-1,i}$ & \color{gray} $\dots$ & \color{gray} $y_{i,t-p+3}$ & \color{gray} $y_{i,t-p+2}$\\
             \vdots &  \vdots & \vdots & \vdots & \vdots & \vdots & \vdots \\
             $y_{i,t+h}$  & \color{gray} $y_{i,t+h-1}$ & \color{gray} $y_{i,t+h-2}$ & \color{gray} $y_{t+h-3,i}$& \color{gray} $\dots$ & \color{gray} $y_{i,t+h-p+1}$ & \color{gray} $y_{i,t+h-p}$\\
        \end{tabular}
\caption{Illustration of the data used by the $i$-th data owner when fitting a VAR model.}
\label{fig:XZmatrices}
\end{figure}

\subsection{Estimation in VAR Models} \label{sec:var_estimation}

Commonly, when the number of covariates included, $np$, is substantially smaller than the length of the time series, $T$, the VAR model can be fitted using multivariate least squares {solution given by} 
\begin{equation} \label{eq:olsVAR}
	\hat{\mathbf B}_\text{LS}=\argmin_{\mathbf B} \left(\|\mathbf Y-\mathbf{ZB}\|_2^2\right) \ ,
\end{equation}
 where $\|.\|_r$ represents both vector and matrix $L_r$ norms.  However, in collaborative forecasting, as the number of data owners increases, as well as the number of lags, it becomes crucial to use regularization techniques, such as LASSO, in order to introduce sparsity into the coefficient matrix estimated by the model. 
In the standard LASSO-VAR approach (see \cite{Nicholson2017} for different variants of the LASSO regularization in the VAR model), the coefficients are {given by}
\begin{equation} \label{eq:lasso}
	\hat{\mathbf B}=\argmin_{\mathbf B} \left(\frac{1}{2}\|\mathbf Y-\mathbf{ZB}\|_2^2+\lambda \|\mathbf B\|_1\right),
\end{equation}
where $\lambda > 0$ is a scalar penalty parameter.

With the addition of the LASSO regularization term, the {convex} objective function in~(\ref{eq:lasso}) becomes non-differentiable, thus limiting the variety of optimization techniques that can be employed. In this domain, ADMM {(which was described in}~\ref{sec:admm}) is a {widespread} and computationally efficient technique that enables the parallel estimation for data divided by features.
The ADMM formulation of the non-differentiable cost function associated to LASSO-VAR model in \eqref{eq:lasso} solves the optimization problem
\begin{equation} \label{eq:admmlasso}
	\min_{\mathbf B, \mathbf H} \Big(\frac{1}{2}\|\mathbf Y-\mathbf{ZB}\|_2^2+\lambda \|\mathbf H\|_1\Big) \text{ subject to } \mathbf{H}=\mathbf{B} \ ,
\end{equation}
which differs from~\eqref{eq:lasso} by splitting $\mathbf B$ into two parts ($\mathbf B$ and $\mathbf H$).
This allows splitting the objective function in two distinct objective functions, $f(\mathbf B)=\frac{1}{2}\|\mathbf Y-\mathbf{ZB}\|_2^2$ and $g(\mathbf H)=\lambda \|\mathbf H\|_1$. The augmented Lagrangian{~\citep{boyd2011distributed}} of this problem is 
\begin{equation} \label{eq:augLagrangean}
	L_\rho(\mathbf B, \mathbf H, \mathbf W)=\frac{1}{2}\|\mathbf Y-\mathbf{ZB}\|_2^2+\lambda \|\mathbf H\|_1+\mathbf W\tran(\mathbf B-\mathbf H)+\frac{\rho}{2}\|\mathbf B - \mathbf H\|_2^2 \ ,
\end{equation}
where $\mathbf W$ is the dual variable and $\rho>0$ is the penalty parameter. The scaled form of this Lagrangian is 
\begin{equation} \label{eq:augLagrangeanX}
	L_\rho(\mathbf B, \mathbf H, \mathbf U)=\frac{1}{2}\|\mathbf Y-\mathbf{ZB}\|_2^2+\lambda \|\mathbf H\|_1+\frac{\rho}{2}\|\mathbf B-\mathbf H+\mathbf U\|^2-\frac{\rho}{2}\|\mathbf U\|^2 \ ,
\end{equation}
where $\mathbf U=(1/\rho)\mathbf W$ is the scaled dual variable associated with the constrain $\mathbf B=\mathbf H$. Hence, {according to~\eqref{ADMM},} the ADMM formulation for LASSO-VAR consists in the following iterations~\citep{Cavalcante2017},
\begin{equation}  \label{ADMMLASSOVAR} \left
		\{
            \begin{array}{l} \displaystyle 
              \mathbf B^{k+1} :=\argmin_{\mathbf B}\Big( \frac{1}{2}\|\mathbf{Y -ZB}\|_2^2+\frac{\rho}{2}\|\mathbf{B}- \mathbf H^k+\mathbf U^k\|_2^2\Big) \\
              \displaystyle 
              \mathbf H^{k+1}:=\argmin_{\mathbf H}\Big(\lambda \|\mathbf H\|_1 + \frac{\rho}{2}\|\mathbf B^{k+1}-\mathbf H+\mathbf U^k\|_2^2\Big)\\
              \mathbf U^{k+1}:=\mathbf U^k+\mathbf B^{k+1}-\mathbf H^{k+1}.
            \end{array}
          \right.
\end{equation}

{Concerning the LASSO-VAR model, and since data is naturally divided by features (i.e. $\mathbf Y=\allowbreak[\mathbf Y_{A_1},\allowbreak\dots,\allowbreak \mathbf Y_{A_n}]$, $\mathbf Z=\allowbreak[\mathbf Z_{A_1},\allowbreak \dots, \allowbreak \mathbf Z_{A_n}]$ and $\mathbf B=[\mathbf B\tran_{A_1},\dots, \mathbf B\tran_{A_n}]\tran$) and the functions $\|\mathbf{Y-ZB}\|_2^2$ and $\|\mathbf B\|_1$ are decomposable (i.e. $\|\mathbf{Y-ZB}\|_2^2=\|\mathbf Y -\sum_{i=1}^n\mathbf Z_{A_i}\mathbf B_{A_i}\|_ 2^2$ and $\|\mathbf B\|_1=\sum_{i=1}^n \mathbf \|\mathbf B_{A_i}\|_1$), the model fitting problem~\eqref{eq:lasso} becomes 
\begin{equation} \label{eq:lasso_split}
	\argmin_{\boldsymbol\Gamma} \left(\frac{1}{2}\|\mathbf Y-\sum_{i=1}^n\mathbf{Z}_{A_i}\mathbf{B}_{A_i}\|_2^2+\lambda \sum_{i=1}^n\|\mathbf B_{A_i}\|_1\right),
\end{equation}
$\boldsymbol\Gamma=\{\mathbf B_{A_1},\dots, \mathbf B_{A_n}\}$, which is rewritten as
\begin{equation} 
	\argmin_{\boldsymbol\Gamma'} \left(\frac{1}{2}\|\mathbf Y-\sum_{i=1}^n\mathbf{H}_{A_i}\|_2^2+\lambda \sum_{i=1}^n\|\mathbf B_{A_i}\|_1\right) \text{ s.t. } {\mathbf{B}_{A_1}\mathbf{Z}_{A_1}=\mathbf{H}_{A_1}, \ \dots, \ \mathbf{B}_{A_n}\mathbf{Z}_{A_n}=\mathbf{H}_{A_n} \ ,} 
\end{equation}
{$\boldsymbol \Gamma' =\{\mathbf B_{A_1}, \dots, \mathbf B_{A_n}, \mathbf H_{A_1}, \dots, \mathbf H_{A_n}\}$}, while the corresponding distributed ADMM formulation~\citep{boyd2011distributed,Cavalcante2017}} is the one presented in the system of equations~(\ref{eq:admmcolumn}),
\begin{subequations}\label{eq:admmcolumn}
\begin{equation}\label{eq:admmcolumnA}
  \mathbf B_{A_i}^{k+1} =\argmin_{\mathbf B_{A_i}}\left( \frac{\rho}{2}\|\mathbf{Z}_{A_i}\mathbf{B}_{A_i}^k +\overline{\mathbf H}^k-\overline{\mathbf{ZB}}^{k}-\mathbf U^k-\mathbf Z_{A_i}\mathbf B_{A_i}  \|_2^2 +\lambda\|\mathbf B_{A_i}\|_1\right),
\end{equation}    
\begin{equation}\label{eq:admmcolumnB}
 \overline{\mathbf H}^{k+1} =\frac{1}{N+\rho}\left(\mathbf Y+ \rho \overline{\mathbf{ZB}}^{k+1}+\rho \mathbf U^k\right),
\end{equation}
\begin{equation}\label{eq:admmcolumnC}
\mathbf U^{k+1} =\mathbf U^k+ \overline{\mathbf{ZB}}^{k+1}-\overline{\mathbf H}^{k+1},
\end{equation}
\end{subequations}
\normalsize
where $\overline{\mathbf{ZB}}^{k+1} = \frac{1}{n}\sum_{j=1}^n \mathbf{Z}_{A_j}\mathbf{B}^{k+1}_{A_j}$ and  $\mathbf{B}_{A_i}^{k+1} \in \mathbb{R}^{p \times n}$, $\mathbf{Z}_{A_i}\in \mathbb{R}^{T \times p}$,$\mathbf{Y}\in \mathbb{R}^{T \times n}$, $\overline{\mathbf{H}}^k, \mathbf{U} \in \mathbb{R}^{T\times n}$, $i=1,..., n$. 

{Although the parallel computation is an appealing property for the design of a privacy-preserving approach, ADMM is an iterative optimization process that requires intermediate calculations, thus a careful analysis is necessary to evaluate if some confidentiality breaches can occur at the end of some iterations.}

\subsection{Privacy Analysis} \label{sec:privacy_analysis_var}

\subsubsection{Data Transformation with Noise Addition}\label{subsec:noiseaddition_experiments}

{This section presents experiments with simulated data and solar energy data collected from a smart grid pilot in Portugal. The objective is to quantify the impact of data distortion (through noise addition) into the model forecasting skill.}
 
\textit{a) Synthetic Data:} 
An experiment has been performed to add random noise from the Gaussian {distribution with zero mean and variance $b^2$, Laplace distribution with zero mean and scale parameter $b$ and Uniform distribution with support $[-b,b]$, represented by  $\mathcal N(0,b^2)$, $\mathcal L(0,b)$ and $\mathcal U(-b,b)$, respectively.}
Synthetic data generated by VAR processes are used to measure the differences between the coefficients' values when adding noise to the data. The simplest case considers a VAR with two data owners and two lags described by
$$\left( \begin{matrix}
	y_{1,t}\ y_{2,t}\\
\end{matrix} \right){=}
\left( \begin{matrix}
	y_{1,t-1} \ y_{2,t-1}\ y_{1,t-2} \ y_{2,t-2}\\
\end{matrix} \right)
\left( \begin{matrix}
	0.5 & 0.3 \\ 
	0.3 & 0.75 \\
	-0.3 & -0.05\\
	-0.1 & -0.4\\
\end{matrix} \right){+}
\left( \begin{matrix}
	\varepsilon_{1,t} \ \varepsilon_{2,t}\\
\end{matrix} \right).$$
The {second} case {includes} ten data owners and three lags, introducing a high percentage of null coefficients ($\approx 86\%$){, with} Figure~\ref{fig:var10_3} illustrating the considered coefficients. Since a specific configuration can generate {various} distinct trajectories, 100 simulations are performed for each specified VAR model, each of them with 20,000 {timestamps}. For both simulated datasets, the errors $\boldsymbol\varepsilon_t$ were assumed to follow a multivariate Normal distribution with a zero mean vector and a covariance matrix equal to the identity matrix of appropriate dimensions. Distributed ADMM (detailed in {S}ection~\ref{sec:admm}) was used to estimate the LASSO-VAR coefficients, considering two different noise characterizations, $b \in \{0.2,0.6\}$.

\begin{figure}
\centering
\includegraphics[width=0.9\linewidth]{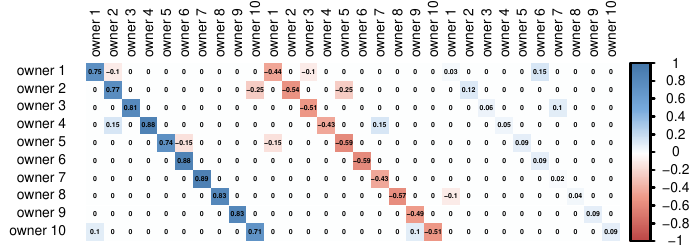}
\caption{Transpose of the coefficient matrix used to generate the VAR with 10 data owners and 3 lags.}
\label{fig:var10_3}
\end{figure}

Figure~\ref{fig:coefs_VARprocesses} summarizes the mean and the standard deviation of the absolute difference {between the real and estimated coefficients,} for both VAR processes, from 100 simulations. The greater the noise $b$, the greater the distortion of the estimated coefficients. Moreover, the Laplace distribution, which has desirable properties to make the data private according to the differential privacy framework, registers the greater distortion in the estimated model.

\begin{figure}
\centering
\includegraphics[width=0.9\linewidth,trim={0 0 0 0.6cm},clip]{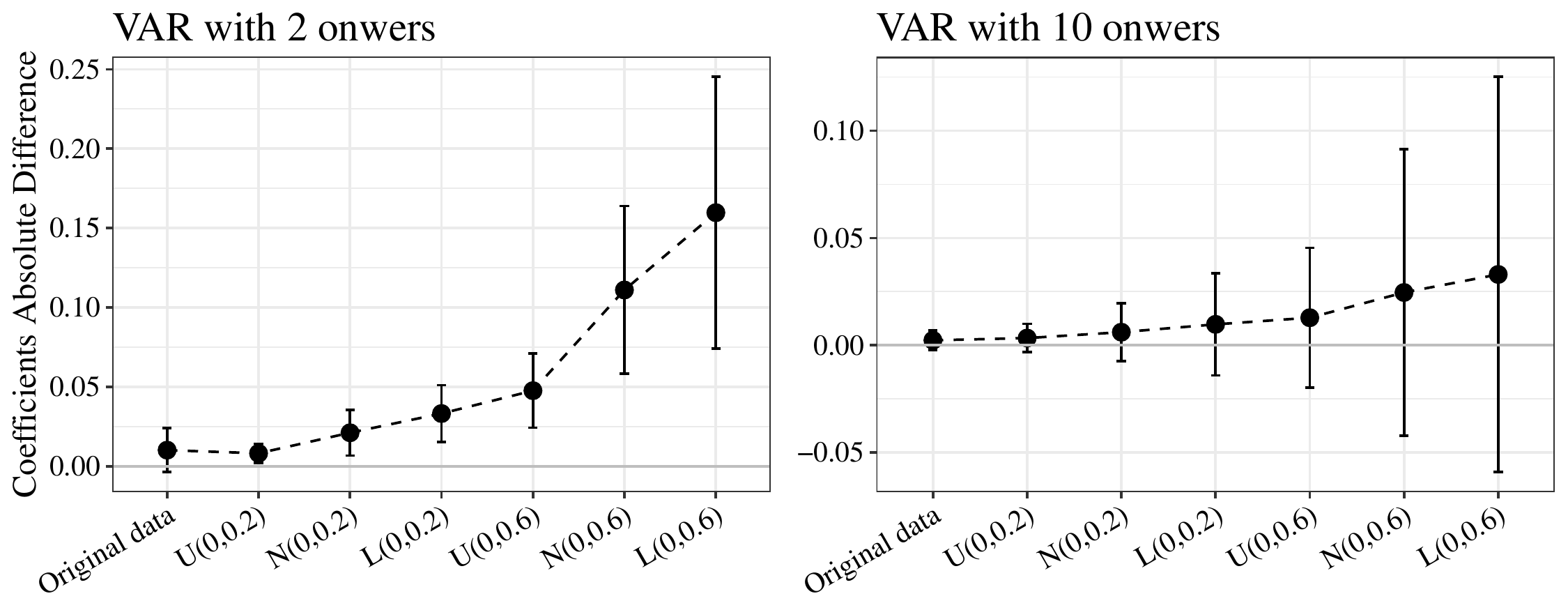}
\caption{Mean {$\pm$} standard deviation {for} the absolute difference {between the real and estimated coefficients} (left: VAR with 2 data owners, right: VAR with 10 data owners).}
\label{fig:coefs_VARprocesses}
\end{figure}

Using the original data, the ADMM solution tends to stabilize after 50 iterations, and the value of the coefficients is correctly estimated (the difference is approximately zero). {Regarding} the distorted time series, it converges faster, but the coefficients deviate from the real ones. {In fact}, adding noise contributes to decreasing the absolute value of the coefficients, i.e.\ the relationships between the time series {become} weakened.

These experiments allow drawing some conclusions about the use of differential privacy. The Laplace distribution has {advantageous} properties, since it ensures $\varepsilon$-differential privacy {when} random noise follows $\mathcal L(0,\frac{\Delta f_1}{\varepsilon})$. For the VAR with two data owners, $\Delta f_1 \approx 12$ since the observed values are in the interval $[-6,6]$. Therefore, $\varepsilon=20$ when $\mathcal L(0, 0.6)$ and $\varepsilon=15$ when $\mathcal L(0, 0.8)$, meaning that the data still encompass much relevant information. Finally, to verify the impact of noise addition into forecasting performance, Figure~\ref{fig:imp_VARprocesses} illustrates the improvement of each estimated VAR$_2(2)$ model (with and without noise addition) over the Autoregressive (AR) model estimated with original time series, in which collaboration is not used. This improvement is measured in terms of Mean Absolute Error (MAE) and Root Mean Squared Error (RMSE) values. {Concerning} the case with ten data owners, and when using data without noise, seven data owners improve their forecasting performance, which was expected from the coefficient matrix in Figure~\ref{fig:var10_3}. When Laplacian noise is applied to the data, only one data owner (the first one) improves its forecasting skill (when compared to the AR model) by using the estimated VAR model. Even though the masked data continues to provide relevant information, the model obtained for the Laplacian noise performs worse than the AR model for the second data owner, making the VAR useless for the majority of the data owners. 

\begin{figure}
\centering
\includegraphics[width=\linewidth,trim={0 0 0 0cm},clip]{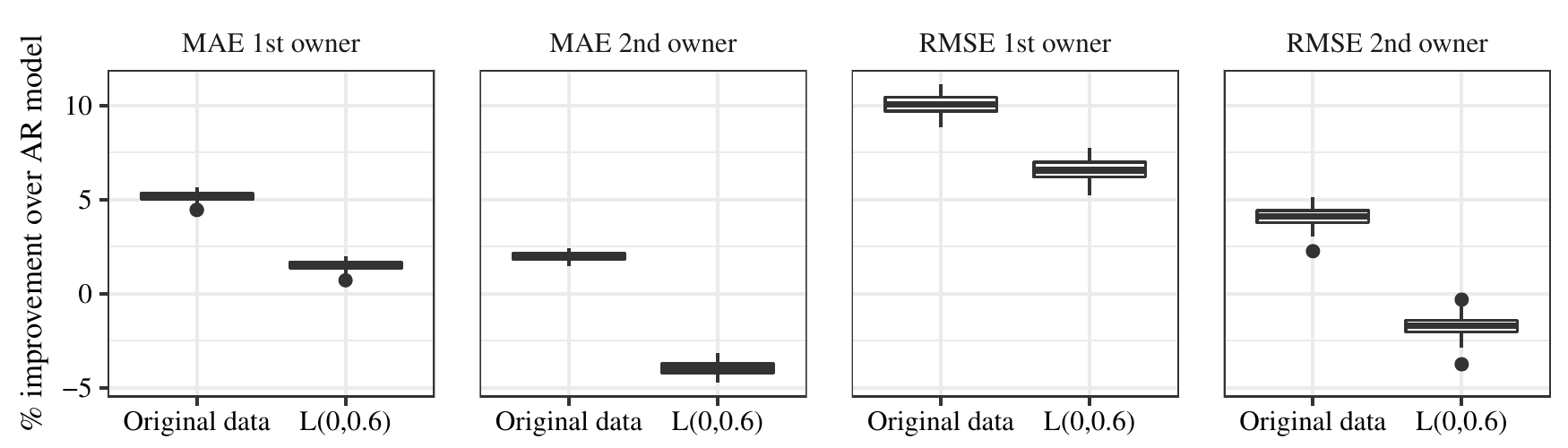}
\caption{Improvement (\%) of VAR$_2(2)$ model over AR(2) model, in terms of MAE and RMSE for synthetic data.}
\label{fig:imp_VARprocesses}

\begin{subfigure}{.5\textwidth}
  \centering
  \includegraphics[width=\linewidth]{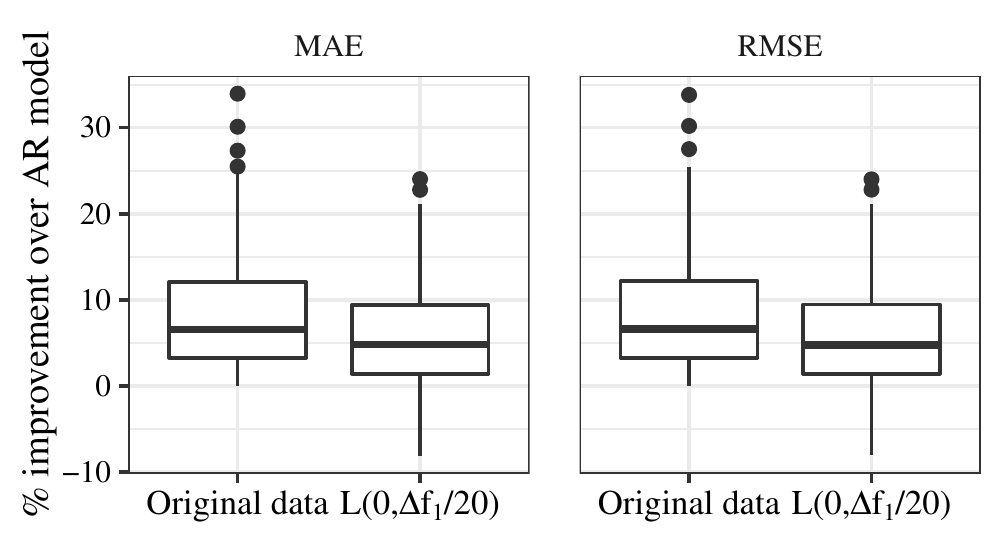}
  \caption{VAR$_{2}(2)$ model.}
\end{subfigure}%
\begin{subfigure}{.5\textwidth}
  \centering
  \includegraphics[width=\linewidth]{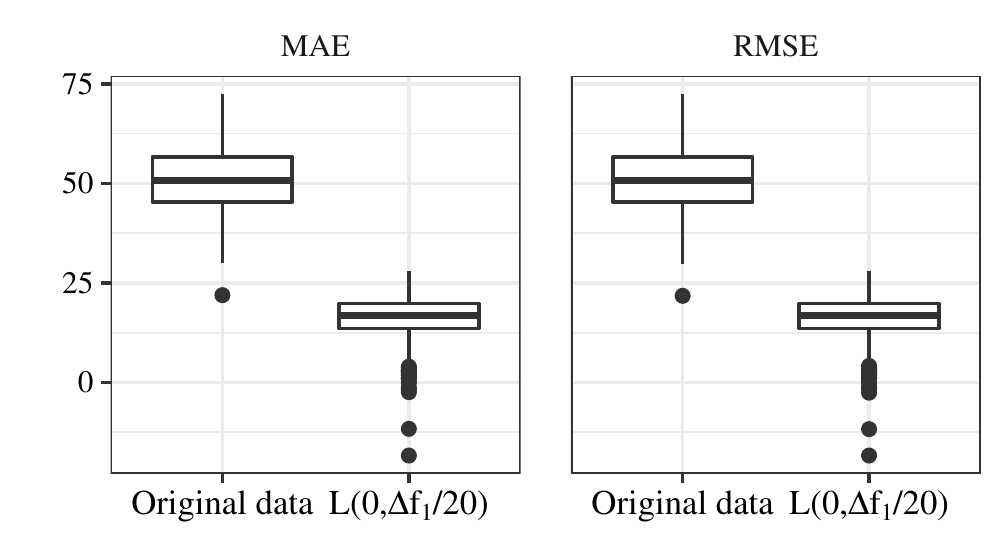}
  \caption{VAR$_{10}(3)$ model.}
\end{subfigure}%
\caption{Improvement (\%) of VAR model over AR model, in terms of MAE and RMSE for synthetic data.}%
\label{fig:imp_genVARprocesses}%
\end{figure}

However, the results cannot be generalized for all VAR models, especially regarding the illustrated VAR$_{10}(3)$, which is very close to the AR(3) model. Given that, a third experiment is proposed, in which 200 random coefficient matrices are generated for a stationary VAR$_{2}(2)$ and VAR$_{10}(3)$ following the algorithm proposed by~\cite{ansley1986note}. Usually, the generated coefficient matrix has no null entries and the higher values are not necessarily found on diagonals. Figure~\ref{fig:imp_genVARprocesses} illustrates the improvement for each data owner when using a VAR model (with and without noise addition) over the AR model. In this case, the percentage of times the AR model performs better than the VAR model with distorted data is smaller, but the degradation of the models is still noticeable, especially in relation to the case with 10 data owners.

\textit{b) Real Data:}
\label{realdata}
The real dataset encompasses hourly time series of solar power generation from 44 micro-generation units located in \'Evora city (Portugal){, covering} the period from February 1, 2011 to March 6, 2013. As in~\citet{cavalcante2017solar}, records corresponding to a solar zenith angle higher than 90$^\circ$ were removed, in order to take off nighttime hours (i.e., hours without any generation). Furthermore, {and} to make the time series stationary, a normalization of the solar power was applied by using a clear-sky model (see~\cite{Bacher2009}) that gives an estimate of the solar power in clear sky conditions at any given time. The power generation for the next hour is modeled {through} the VAR model, which combines data from 44 data owners {and} considers 3 non-consecutive lags (1, 2 and 24h).  
Figure~\ref{fig:realCS} (a) summarizes the improvement for the 44 PV power plants over the autoregressive model, in terms of MAE and RMSE. The quartile 25\% allows concluding that MAE improves at least 10\% for 33 of the 44 PV power plants, {when} data owners share their observed data. {As to} RMSE the improvement is not so {significant,} but is still greater than zero. Although the data obtained after Laplacian noise addition keeps its temporal dependency, as illustrated in Figure~\ref{fig:realCS} (b), the corresponding VAR model is useless for 4 of the 44 data owners. When considering RMSE, 2 of the 44 data owners obtain better results by using an autoregressive model. Once again, the resulting model suffers a significant reduction in  terms of forecasting capability.

\begin{figure}
\centering
\begin{subfigure}{.5\textwidth}
  \centering
  \includegraphics[width=\linewidth]{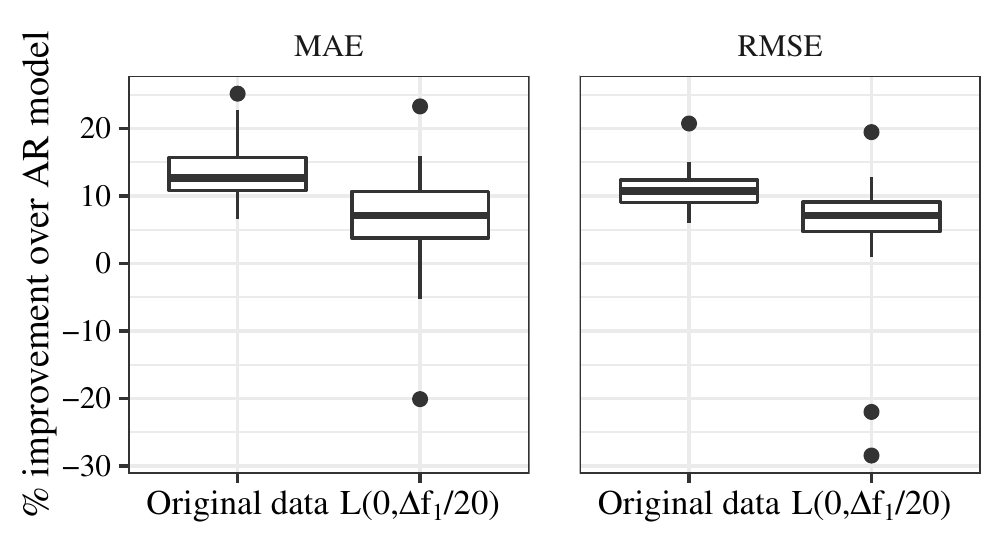}
  \caption{Improvement (\%) of VAR$_{44}$ model over AR model, in terms of MAE and RMSE.}
  \label{fig:sub1}
\end{subfigure}%
\begin{subfigure}{.5\textwidth}
  \centering
  \includegraphics[width=\linewidth]{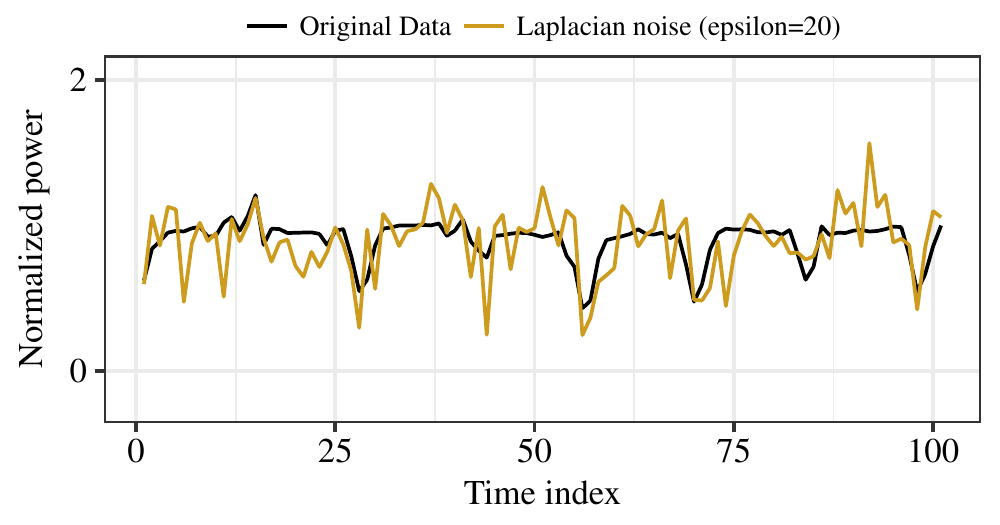}
  \caption{Example of the normalized time series.}
  \label{fig:sub2}
\end{subfigure}%
\caption{Results for real case-study with solar power time series.}%
\label{fig:realCS}%
\end{figure}

\subsubsection{Linear Algebra-based Protocols}\label{subsec:emp_analysis}


Let us consider the case with two data owners. Since the multivariate least squares estimate for the VAR model with covariates $\mathbf Z=[\mathbf{Z}_{A_1},\mathbf{Z}_{A_2}]$ and target $\mathbf Y=[\mathbf{Y}_{A_1},\mathbf{Y}_{A_2}]$ 
is 
\begin{align}
\hat{\mathbf B}_\text{LS}&= \left(\left[
\begin{tabular}{c}
$\mathbf{Z}\tran_{A_1}$\\ 
$\mathbf{Z}\tran_{A_2}$\\
\end{tabular}\right][\mathbf{Z}_{A_1},\mathbf{Z}_{A_2}]\right)^{-1}
\left(\left[
\begin{tabular}{c}
$\mathbf{Z}\tran_{A_1}$\\ 
$\mathbf{Z}\tran_{A_2}$\\
\end{tabular}\right][\mathbf{Y}_{A_1},\mathbf{Y}_{A_2}]\right)\\
&=
\left(
\begin{tabular}{cc}
$\mathbf{Z}\tran_{A_1}\mathbf{Z}_{A_1}$ & $\mathbf{Z}\tran_{A_1}\mathbf{Z}_{A_2}$\\ 
$\mathbf{Z}\tran_{A_2}\mathbf{Z}_{A_1}$ & $\mathbf{Z}\tran_{A_2}\mathbf{Z}_{A_2}$\\
\end{tabular}\right)^{-1}
\left(
\begin{tabular}{cc}
$\mathbf{Z}\tran_{A_1}\mathbf{Y}_{A_1}$ & $\mathbf{Z}\tran_{A_1}\mathbf{Y}_{A_2}$\\ 
$\mathbf{Z}\tran_{A_2}\mathbf{Y}_{A_1}$ & $\mathbf{Z}\tran_{A_2}\mathbf{Y}_{A_2}$\\
\end{tabular}\right),
\end{align}
the data owners need to jointly compute $\mathbf Z_{A_1}\tran \mathbf Z_{A_2}$, $\mathbf Z_{A_1}\tran \mathbf Y_{A_2}$ and $\mathbf Z_{A_2}\tran \mathbf Y_{A_1}$. 

As mentioned in the introduction of {S}ection~\ref{linear-algebra-protocols}, the work of~\cite{du2004privacy} proposes protocols for secure matrix multiplication for the situations where two data owners observe the same common target {matrix} and different confidential covariates. Unfortunately, {without assuming a trusted third entity for generating random matrices,} the proposed protocol fails when applied to the VAR model {because $2(T-1)p$ values} of the covariate matrix $\mathbf Z\in \mathbb{R}^{T\times 2p}$ are {included in} the target {matrix} $\mathbf Y\in \mathbb{R}^{T\times 2}$, which {is} also undisclosed.  {Additionally, $\mathbf Z_{A_i}\in \mathbb{R}^{T\times p}$ has $T+p-1$ unique values instead of $Tp$ -- see Figure~\ref{fig:XZmatrices}.}

\begin{proposition}
{Consider the case in which two data owners, with private data $\mathbf Z_{A_i}\in \mathbb{R}^{T\times p}$ and $\mathbf Y_{A_i}\in \mathbb{R}^{T\times 1}$, want to estimate a VAR model without trusting a third entity, $i=1,2$. Assume that the $T$ records are consecutive, as well as the $p$ lags.
The multivariate least squares estimate for the VAR model with covariates $\mathbf Z=[\mathbf{Z}_{A_1},\mathbf{Z}_{A_2}]$ and target $\mathbf Y=[\mathbf{Y}_{A_1},\mathbf{Y}_{A_2}]$ requires the computation of $\mathbf Z_{A_1}\tran \mathbf Z_{A_2}$, $\mathbf Z_{A_1}\tran \mathbf Y_{A_2}$ and $\mathbf Z_{A_2}\tran \mathbf Y_{A_1}$. 

If data owners use the protocol proposed by~\cite{du2004privacy} for computing such matrices, then the information exchanged allows to recover data matrices.
}
\end{proposition}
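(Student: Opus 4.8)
The plan is to trace exactly which matrices each owner is forced to receive in order to execute the two‑party $\mathbf A\cdot\mathbf C$ protocol of \cite{du2004privacy}, and then to show that, because the VAR covariate blocks are Hankel matrices that overlap the target blocks (Figure~\ref{fig:XZmatrices}), those received matrices already determine each owner's time series. First I would analyse a single run of the protocol, with owner~1 holding $\mathbf A$, owner~2 holding $\mathbf C$, and a jointly generated invertible $\mathbf M\in\mathbb R^{s\times s}$ known to both. To assemble its share $\mathbf V_a=\mathbf A\mathbf M_\text{left}(\mathbf M^{-1})_\text{top}\mathbf C$, owner~1 can compute $\mathbf A\mathbf M_\text{left}$ by itself but must be handed the factor $(\mathbf M^{-1})_\text{top}\mathbf C$, which only owner~2 can form; symmetrically owner~2 must be handed $\mathbf A\mathbf M_\text{right}$. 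So a single run necessarily reveals to the $\mathbf A$‑holder a \emph{known}‑coefficient linear image of $\mathbf C$ of full row rank $\lceil s/2\rceil$, and to the $\mathbf C$‑holder a known‑coefficient linear image of $\mathbf A$ of full column rank $\lfloor s/2\rfloor$ (re‑routing the messages merely swaps which owner sees such an image, so the argument is unaffected). For a generic private matrix this leak is harmless, leaving an ambiguity of dimension $\lfloor s/2\rfloor$ per column; it is precisely this slack that the VAR structure will remove.

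Next I would collect the leaks over the three runs the estimator forces, computing $\mathbf Z\tran_{A_1}\mathbf Z_{A_2}$, $\mathbf Z\tran_{A_1}\mathbf Y_{A_2}$ and $\mathbf Z\tran_{A_2}\mathbf Y_{A_1}$, each with an \emph{independently} drawn $T\times T$ random matrix $\mathbf M_1,\mathbf M_2,\mathbf M_3$. By the first step, owner~1 receives $(\mathbf M_1^{-1})_\text{top}\mathbf Z_{A_2}$ from run~1 and $\mathbf Z\tran_{A_2}(\mathbf M_3)_\text{right}$ from run~3 (there owner~1 is the $\mathbf C$‑holder, $\mathbf C=\mathbf Y_{A_1}$), giving, column by column, $\lceil T/2\rceil$ and $\lfloor T/2\rfloor$ known linear constraints on $\mathbf Z_{A_2}$ whose stacked $T\times T$ coefficient block is invertible for generic independent $\mathbf M_1,\mathbf M_3$ (two independent generic subspaces of complementary dimension in $\mathbb R^{T}$ meet only at $0$). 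Hence owner~1 reconstructs $\mathbf Z_{A_2}$ outright. Now I would invoke the hypothesis that the $T$ records and $p$ lags are consecutive: then $\mathbf Z_{A_2}$ is Hankel and all but one entry of the column $\mathbf Y_{A_2}$ already appears inside $\mathbf Z_{A_2}$ (Figure~\ref{fig:XZmatrices}); the single remaining scalar is an unknown in the constraint $(\mathbf M_2^{-1})_\text{top}\mathbf Y_{A_2}$ that owner~1 received from run~2, so it is determined. Thus owner~1 also reconstructs $\mathbf Y_{A_2}$.

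The symmetric computation — owner~2 receives $\mathbf Z\tran_{A_1}(\mathbf M_1)_\text{right}$ and $\mathbf Z\tran_{A_1}(\mathbf M_2)_\text{right}$ from runs~1 and~2 (two independent images of $\mathbf Z_{A_1}$) and $(\mathbf M_3^{-1})_\text{top}\mathbf Y_{A_1}$ from run~3 — together with the same Hankel overlap lets owner~2 reconstruct $\mathbf Z_{A_1}$ and $\mathbf Y_{A_1}$. Therefore the information exchanged while running the protocol of \cite{du2004privacy} on the VAR suffices for each owner to recover the other's data matrices, which is the claim; the case $p=1$ is identical, the Hankel reduction being vacuous there ($\mathbf Z_{A_i}$ then has $T$ distinct entries) but the two independent images of $\mathbf Z_{A_2}$ still supplying $\lceil T/2\rceil+\lfloor T/2\rfloor=T$ generically independent equations, with the third run closing the residual one‑dimensional gap when $T$ is odd.

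I expect the only genuinely non‑routine step to be the genericity claim: one must show the coefficient block stacked from two \emph{independent} random matrices has full rank rather than merely enough rows — equivalently that the row spaces contributed by different runs lie in general position relative to the (fixed, low‑dimensional) subspace of Hankel matrices — and must check the small‑$T$, small‑$p$ corners where the surplus of equations over unknowns (e.g.\ $(p-1)(T-1)$ for $\mathbf Z_{A_2}$) is tight. Since full rank is a Zariski‑open condition, it suffices to exhibit one admissible choice of the random matrices for which it holds; everything else is bookkeeping of the index pattern in Figure~\ref{fig:XZmatrices}.
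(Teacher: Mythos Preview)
Your proposal is correct and follows essentially the same route as the paper: trace the messages each owner must receive in the two-party $\mathbf A\cdot\mathbf C$ protocol across the three required products, count the resulting linear constraints on the opposing owner's $\mathbf Z_{A_i}$ (two runs contributing $\lfloor T/2\rfloor+\lceil T/2\rceil$ equations), and then use the Hankel overlap between $\mathbf Z_{A_i}$ and $\mathbf Y_{A_i}$ to close the remaining gap via the third run. The only substantive difference is that you make explicit the genericity/full-rank condition on the stacked coefficient block (and sketch its justification as a Zariski-open property), whereas the paper leaves this implicit in a pure equation-versus-unknown count; your treatment is the more careful of the two.
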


\begin{proof}

{As in \cite{du2004privacy}, l}et us consider the case with two data owners without a third entity generating random matrices. 

In order to compute $\mathbf Z_{A_1}\tran \mathbf Z_{A_2}$ both data owners define a matrix $\mathbf M \in \mathbb R^{T\times T}$ and compute its inverse $\mathbf M^{-1}$. {Then, the protocol defines that
\begin{align*}
\mathbf Z_{A_1}\tran \mathbf Z_{A_2} &= \mathbf Z_{A_1}\tran \mathbf{MM^{-1}}\mathbf Z_{A_2} = \mathbf A[\mathbf M_\text{left}, \mathbf M_\text{right}]
\left[\begin{tabular}{l}
$(\mathbf M^{-1})_\text{top}$ \\
$(\mathbf M^{-1})_\text{bottom}$
\end{tabular}\right]
\mathbf Z_{A_2} \\
& = \underbrace{\mathbf Z_{A_1}\tran \mathbf{M}_\text{left}(\mathbf M^{-1})_\text{top}\mathbf Z_{A_2}}_\text{derived by Owner~\#1} + \underbrace{\mathbf Z_{A_1}\tran \mathbf{M}_\text{right}(\mathbf M^{-1})_\text{bottom}\mathbf Z_{A_2}}_\text{derived by Owner~\#2} \ ,
\end{align*}
requiring data owners to share $\mathbf Z\tran_{A_1}\mathbf M_\text{right} \in \mathbb{R}^{p\times T/2}$ and $(\mathbf M^{-1})_\text{top}\mathbf Z_{A_2}\in \mathbb{R}^{T/2 \times p}$, respectively.} This implies that each data owner shares $p T/2$ values.

Similarly, the computation of $\mathbf Z_{A_1}\tran \mathbf Y_{A_2}$ implies that data owners define a matrix $\mathbf M^*$, and share $\mathbf Z\tran_{A_1}\mathbf M^*_\text{right} \in \mathbb{R}^{p\times T/2}$ and $(\mathbf {M^*}^{-1})_\text{top}\mathbf Y_{A_2}\in \mathbb{R}^{T/2 \times p}$, respectively, providing new $p T/2$ values. This means that Owner~\#2 receives $\mathbf Z\tran_{A_1}\mathbf M_\text{right}$ and $\mathbf Z\tran_{A_1}\mathbf M^*_\text{right}$, i.e. $Tp$ values, and {may} recover $\mathbf Z_{A_1}$ which consists of $T p$ values, representing a confidentiality breach. Furthermore, when considering a VAR model with $p$ lags, $\mathbf Z_{A_1}$ has $T+p-1$ unique values, meaning less values to recover. {Analogously}, Owner~\#1 may recover $\mathbf Z_{A_2}$ through the matrices shared for the computation of $\mathbf Z_{A_1}\tran \mathbf Z_{A_2}$ and $\mathbf Z_{A_2}\tran \mathbf Y_{A_1}$.

Lastly, when considering a VAR with $p$ lags, $\mathbf Y_{A_i}$ only has $p$ values that are not in $\mathbf Z_{A_i}$. {Since, while computing $\mathbf Z_{A_1}\tran \mathbf Y_{A_2}$, Owner~\#1 receives $T/2$ values from $(\mathbf M*^{-1})_\text{top}\mathbf Y_{A_2}\in \mathbb{R}^{T/2 \times 1}$, a confidentiality breach can occur (in general $T/2>p$). In the same way, Owner~\#2 recovers $\mathbf Y_{A_1}$ when computing $\mathbf Z_{A_2}\tran \mathbf Y_{A_1}$.}
{\qed}
\end{proof}

The main disadvantage of the linear algebra-based methods is that they do not take into account that, in the VAR model, both target variables and covariates are private and that a large proportion of the {covariates} matrix is determined by knowing the target variables. This means that the data shared between data owners may be enough for competitors to be able to reconstruct the original data. For the method proposed by~\cite{karr2009privacy}, a consequence of such data is that the assumption $rank\left(\mathbf{(I-WW\tran)} \allowbreak \mathbf C\right)=m{-}g$ may {still provide a number of linearly independent equations on the other data owner's data, which is enough for recovering their data.}

\subsubsection{ADMM Method and Central Node}\label{subsec:admmAnalysis}

The work of~\cite{zhang2018distributed} appears to be a promising approach for dealing with the problem of private data during the ADMM iterative process described by~\eqref{eq:admmcolumn}. 
Based on~\cite{zhang2018distributed}, for each iteration $k$, each data owner $i$ communicates their local results, $\mathbf{Z}_{A_i}\mathbf B^{k+1}_{A_i}$, to the central node, $\mathbf Z_{A_i}\in \mathbb{R}^{T\times p},\mathbf B^{k+1}_{A_i}\in \mathbb{R}^{p\times n}, i=1,\dots,n$. Then, the central node computes the intermediate matrices in \eqref{eq:admmcolumnB}-\eqref{eq:admmcolumnC} and returns the matrix $\overline{\mathbf H}^k-\overline{\mathbf{ZB}}^k-\mathbf U^k$ to each data owner, {in order} to update $\mathbf{B}_{A_i}$ in the next iteration, as seen in~\eqref{eq:admmcolumnA}. Figure~\ref{ADMMprivate} illustrates the methodology for the LASSO-VAR with three data owners. In this solution, there is no direct exchange of private data. {However}, as {presented} next, not only can the central node recover the original data, but also individual data owners can obtain a good estimation of the data used by the competitors. 

{
\begin{proposition}
In the most optimistic scenario, without repeated values in $\mathbf{Y}_{A_i}\in\mathbb{R}^{T\times 1}$ and $\mathbf Z_{A_i}\in\mathbb{R}^{T\times p}$, when applying the algorithm from~\cite{zhang2019admm} to solve the LASSO-VAR model in~\eqref{eq:admmcolumn}, the central agent can recover the sensible data at the end of 
\begin{equation}
k=\ceil[\bigg]{\frac{Tp}{Tn-pn}}    
\end{equation} 
iterations{, where $\ceil{x}$ denotes the ceiling function}.
\end{proposition}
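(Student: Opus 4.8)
The plan is to turn the statement into a degrees-of-freedom count: tally how many scalar equations the central agent accumulates about the private matrices after $k$ iterations, compare with the number of unknown scalars, and read off the first $k$ at which the former meets the latter.

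First I would identify exactly what the central agent sees. In the scheme of~\cite{zhang2019admm} specialised to the feature-split problem~\eqref{eq:admmcolumn}, every data owner $i$ must hand the central agent its local product $\mathbf Z_{A_i}\mathbf B^{j}_{A_i}\in\mathbb R^{T\times n}$ at each iteration $j$: this is precisely what the central agent needs to form $\overline{\mathbf{ZB}}^{j}=\frac1n\sum_{\ell}\mathbf Z_{A_\ell}\mathbf B^{j}_{A_\ell}$, hence $\overline{\mathbf H}^{j}$ and $\mathbf U^{j}$ via~\eqref{eq:admmcolumnB}--\eqref{eq:admmcolumnC}, and the feedback term $\overline{\mathbf H}^{j}-\overline{\mathbf{ZB}}^{j}-\mathbf U^{j}$ that it returns for the update~\eqref{eq:admmcolumnA}. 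Everything else on the central agent's side ($\overline{\mathbf H}^{j}$, $\mathbf U^{j}$, $\overline{\mathbf{ZB}}^{j}$, and the target $\mathbf Y$, which it needs to evaluate~\eqref{eq:admmcolumnB}) it either computes itself or already holds. So after $k$ iterations the only genuinely new private content it has extracted from owner $i$ is the list $\mathbf Z_{A_i}\mathbf B^{1}_{A_i},\dots,\mathbf Z_{A_i}\mathbf B^{k}_{A_i}$, i.e. $k$ matrices of size $T\times n$; the sensible data still missing is the covariate matrix $\mathbf Z_{A_i}$.

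Next I would carry out the count for a fixed owner $i$. In the most optimistic scenario --- no entry of $\mathbf Z_{A_i}$ repeated and none coinciding with an entry of $\mathbf Y_{A_i}$ --- the VAR lag structure cannot be used to compress $\mathbf Z_{A_i}$ and it must be treated as $Tp$ unknown scalars (rather than the $T+p-1$ it really has); the assumption on $\mathbf Y_{A_i}$ is what guarantees that the central agent's knowledge of $\mathbf Y$ yields no free entries of $\mathbf Z_{A_i}$. The iterates $\mathbf B^{1}_{A_i},\dots,\mathbf B^{k}_{A_i}$ are unknown too, $pn$ scalars each, so $kpn$ in all. Against these $Tp+kpn$ unknowns, the $k$ received matrices supply $kTn$ scalar equations, and the private data becomes recoverable once the equations are at least as numerous as the unknowns, i.e. $kTn\ge Tp+kpn$, equivalently $k(Tn-pn)\ge Tp$:
\begin{equation*}
kTn \ \ge\ Tp + kpn
\qquad\Longleftrightarrow\qquad
k\ \ge\ \frac{Tp}{Tn-pn}.
\end{equation*}
The smallest integer $k$ satisfying this is $k=\ceil{Tp/(Tn-pn)}$. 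Since the bound does not depend on $i$, at that iteration the central agent can reconstruct every $\mathbf Z_{A_i}$, which is the assertion.

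The step I expect to be the real obstacle is justifying ``recoverable'': the $k$ received matrices are not a generic linear system, because stacked side by side they equal $\mathbf Z_{A_i}\,[\mathbf B^{1}_{A_i},\dots,\mathbf B^{k}_{A_i}]$, a matrix of rank at most $p$, so a bare rank-$p$ factorisation pins down $\mathbf Z_{A_i}$ only up to the gauge $\mathbf Z_{A_i}\mapsto\mathbf Z_{A_i}\mathbf G$, $\mathbf B^{j}_{A_i}\mapsto\mathbf G^{-1}\mathbf B^{j}_{A_i}$ with $\mathbf G\in\mathrm{GL}_p$. What must be invoked to kill this $p^{2}$-parameter ambiguity is the extra structure available to the central agent but invisible to a generic factorisation: the $\mathbf B^{j}_{A_i}$ are not free --- they are the outputs of the deterministic sub-problem~\eqref{eq:admmcolumnA}, whose inputs $\overline{\mathbf H}^{j}-\overline{\mathbf{ZB}}^{j}-\mathbf U^{j}$ and $\mathbf Z_{A_i}\mathbf B^{j}_{A_i}$ are all known to the central agent and themselves involve $\mathbf Z_{A_i}$, and chaining these relations across iterations removes the remaining freedom. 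A fully rigorous argument would, on top of this, require a non-degeneracy hypothesis on the data, the penalties $\rho$ and $\lambda$, and the LASSO active sets along the trajectory --- which is what the phrase ``most optimistic scenario'' is there to grant --- but the degrees-of-freedom count is what fixes the iteration index and is the core of the proof.
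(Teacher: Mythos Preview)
Your proposal is correct and follows essentially the same approach as the paper: both arguments identify that after $k$ iterations the central agent has received the $k$ matrices $\mathbf Z_{A_i}\mathbf B^{1}_{A_i},\dots,\mathbf Z_{A_i}\mathbf B^{k}_{A_i}$ (i.e., $Tnk$ scalars), while the unknowns are $\mathbf Z_{A_i}$ and the iterates $\mathbf B^{1}_{A_i},\dots,\mathbf B^{k}_{A_i}$ (i.e., $Tp+kpn$ scalars), and then solve $Tnk\ge Tp+kpn$ for the smallest integer $k$. Your final paragraph on the rank-$p$ factorisation ambiguity and how the deterministic structure of~\eqref{eq:admmcolumnA} is needed to break the $\mathrm{GL}_p$ gauge actually goes beyond the paper, which stops at the pure equation-versus-unknown count without addressing identifiability.
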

}

{
\begin{figure}
\newcommand\passos[3]{\begin{tikzpicture}\node[inner sep=0](a){#1};\node[right=2.8em of a, inner sep=0](b){#2};\draw[-latex](a)--node[pos=0.45, bolanro]{#3}(b);\end{tikzpicture}}
\centering
\begin{tikzpicture}[node distance=0.5cm and 6.5cm, shorten >=1mm, shorten <=1mm, font=\small]  
\node[inner sep=2mm, align=center] (A1) {Central\\Node};
\node[align=center, draw, right=1mm of A1] (A2) {\passos{$\overline{\mathbf{ZB}}^k$}{$\overline{\mathbf{ZB}}^{k+1}$}{3}\\\passos{$\overline{\mathbf{H}}^k$}{$\overline{\mathbf{H}}^{k+1}$}{3}\\\passos{$\overline{\mathbf{U}}^k$}{$\overline{\mathbf{U}}^{k+1}$}{3}};
\node[inner sep=0.8mm, draw, fit=(A1)(A2)] (A) {};

\node[inner sep=0.1mm, above left=of A] (B1) {Owner \#1};
\node[draw, right=1mm of B1] (B2) {\passos{$\mathbf B_{A_1}^k$}{$\mathbf B_{A_1}^{k+1}$}{1}};
\node[inner sep=0.8mm, draw, dashed, fit=(B1)(B2)] (B) {};

\node[inner sep=0.1mm, left=of A] (C1) {Owner \#2};
\node[draw, right=1mm of C1] (C2) {\passos{$\mathbf B_{A_2}^k$}{$\mathbf B_{A_2}^{k+1}$}{1}};
\node[draw, dashed, fit=(C1)(C2)] (C) {};

\node[inner sep=0.1mm, below left=of A] (D1) {Owner \#3};
\node[draw, right=1mm of D1] (D2) {\passos{$\mathbf B_{A_3}^k$}{$\mathbf B_{A_3}^{k+1}$}{1}};
\node[inner sep=0.8mm, draw, dashed, fit=(D1)(D2)] (D) {};

\path[draw, latex-] ([yshift=-1.2mm]B.east) -| node[pos=0.25, below, font=\footnotesize] {$\overline{\mathbf H}^k-\overline{\mathbf{ZB}}^k-\mathbf U^k$} node[pos=0.12, bolanro]{4}([xshift=1mm]A.north west);
\path[draw, -latex] ([yshift=+1.2mm]B.east) -| node[pos=0.25, above, font=\footnotesize] {$\mathbf Z_{A_1}\mathbf B_{A_1}^{k+1}$} node[pos=0.35, bolanro]{2} ([xshift=1+2.4mm]A.north west);

\path[latex-] ([yshift=-1.2mm]C.east) edge node[below, font=\footnotesize] {$\overline{\mathbf H}^k-\overline{\mathbf{ZB}}^k-\mathbf U^k$} node[pos=0.25, bolanro]{4}([yshift=-1.2mm]A.west);
\path[-latex] ([yshift=+1.2mm]C.east) edge node[above, font=\footnotesize] {$\mathbf Z_{A_2}\mathbf B_{A_2}^{k+1}$} node[pos=0.77, bolanro]{2} ([yshift=+1.2mm]A.west);

\draw[latex-] ([yshift=-1.2mm]D.east)  -| node [pos=0.28, below, font=\footnotesize] {$\overline{\mathbf H}^k-\overline{\mathbf{ZB}}^k-\mathbf U^k$} node[pos=0.12, bolanro]{4} ([xshift=1+2.4mm]A.south west);
\path[draw, -latex] ([yshift=+1.2mm]D.east) -| node[pos=0.25, above, font=\footnotesize] {$\mathbf Z_{A_3}\mathbf B_{A_3}^{k+1}$} node[pos=0.37, bolanro]{2} ([xshift=1mm]A.south west);
\end{tikzpicture}
\caption{Distributed ADMM LASSO-VAR with a central node and 3 data owners {(related to the algorithm in~\eqref{eq:admmcolumn})}.}
\label{ADMMprivate}
\end{figure}
}

\begin{proof}
Using the notation of {S}ection~\ref{sec:admmVARLASSO}, each of the $n$ data owners is assumed to use the same number of lags $p$ to fit a LASSO-VAR model with a total number of $T$ records ({keep in mind} that $T>np$, otherwise there will be more coefficients to be determined than system equations). At the end of $k$ iterations, the central node receives a total of $T n k$ values from each data owner $i$, corresponding to $\mathbf Z_{A_i} \mathbf B_{A_i}^1,\mathbf Z_{A_i} \mathbf B_{A_i}^2, ..., \mathbf  Z_{A_i} \mathbf B_{A_i}^k {\in \mathbb{R}^{T\times n}}$, and does not know $pnk + Tp$, corresponding to $\mathbf B_{A_i}^1, ..., \mathbf B_{A_i}^k{\in \mathbb{R}^{p\times n}}$ and $ \mathbf Z_{A_i}{\in \mathbb{R}^{T\times p}}$, respectively, $i=1,...,n$. Given that, the solution of the inequality 
{
\begin{equation}
Tnk \geq pnk+Tp \ ,    
\end{equation}
}
in $k$, allows to infer that a confidentiality breach can occur at the end of 
\begin{equation}
k=\ceil[\bigg]{\frac{Tp}{Tn-pn}}    
\end{equation} 
iterations{, where $\ceil{x}$ denotes the ceiling function}. Since $T$ tends to be large, $k$ tends to $\ceil{p/n}$, which may represent a confidentiality breach if the number of iterations required for the algorithm to converge is greater than $\ceil{p/n}$.{\qed}

\end{proof}

{\begin{proposition}\label{prop:zhang2}
In the most optimistic scenario, without repeated values in $\mathbf{Y}_{A_i}\in\mathbb{R}^{T\times 1}$ and $\mathbf Z_{A_i}\in\mathbb{R}^{T\times p}$, when applying the algorithm from~\cite{zhang2019admm} to solve the LASSO-VAR model in~\eqref{eq:admmcolumn}, the data owners can recover the sensible data from competitors at the end of   \begin{equation}
k=\ceil[\bigg]{\frac{Tn+(n-1)(Tp+T)}{Tn-(n-1)pn}}    
\end{equation} 
iterations.
\end{proposition}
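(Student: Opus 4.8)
The plan is to rerun the equation-counting argument that establishes the preceding central-node proposition, but now from the vantage point of a single data owner, say Owner~\#1, who observes only its own block $(\mathbf Z_{A_1},\mathbf Y_{A_1},\mathbf B_{A_1}^m)$ together with, at every iteration, the matrix $\mathbf m^m := \overline{\mathbf H}^m-\overline{\mathbf{ZB}}^m-\mathbf U^m\in\mathbb R^{T\times n}$ returned by the central node (step~4 in Figure~\ref{ADMMprivate}). Three things must be made precise: (i) how much of the hidden ADMM state Owner~\#1 can reconstruct \emph{for free} from the messages and the updates \eqref{eq:admmcolumnB}--\eqref{eq:admmcolumnC}; (ii) what remains genuinely unknown, as a function of $k$; and (iii) the iteration $k$ at which the number of available scalar constraints first reaches the number of scalar unknowns.

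For (i): combining the dual update \eqref{eq:admmcolumnC} with the definition of $\mathbf m^{m+1}$ eliminates $\overline{\mathbf H}^{m+1}$ and $\overline{\mathbf{ZB}}^{m+1}$ and leaves a one-step recursion of the form $2\mathbf U^{m+1}=\mathbf U^m-\mathbf m^{m+1}$; hence the whole dual sequence $\{\mathbf U^m\}$ is pinned down by the known messages together with a single unknown seed $\mathbf U^{0}\in\mathbb R^{T\times n}$ (unlike the central node, a data owner neither fixes nor can infer this initial shared state). Substituting $\overline{\mathbf H}^{m}=\mathbf m^{m}+\overline{\mathbf{ZB}}^{m}+\mathbf U^{m}$ into the averaging step \eqref{eq:admmcolumnB} then isolates $n\,\overline{\mathbf{ZB}}^{m}=\sum_{j=1}^n\mathbf Z_{A_j}\mathbf B_{A_j}^m$ in terms of $\mathbf Y$, $\mathbf U^{m-1}$, $\mathbf U^m$ and $\mathbf m^m$; moving the known block $\mathbf Z_{A_1}\mathbf B_{A_1}^m$ and the (seed-dependent but otherwise known) dual terms to the other side yields, for each $m=1,\dots,k$, a single $T\times n$ matrix identity
\begin{equation*}
\sum_{j=2}^{n}\mathbf Z_{A_j}\mathbf B_{A_j}^{m}-\mathbf Y \;=\; \bigl(\text{a matrix known to Owner~\#1 up to the seed }\mathbf U^{0}\bigr),
\end{equation*}
i.e. $Tn$ scalar equations per iteration and $Tnk$ in total after $k$ iterations.

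For (ii): the unknowns entering these identities are the competitors' fixed data $\mathbf Z_{A_j}\in\mathbb R^{T\times p}$ and $\mathbf Y_{A_j}\in\mathbb R^{T\times1}$ ($j=2,\dots,n$) which, in the most optimistic scenario with no value of $\mathbf Z_{A_j}$ or $\mathbf Y_{A_j}$ repeated, amount to $(n-1)(Tp+T)$ numbers (note $\mathbf Y$ contributes only $(n-1)T$ unknown entries since Owner~\#1 knows its own column $\mathbf Y_{A_1}$); the per-iteration coefficient iterates $\mathbf B_{A_j}^m\in\mathbb R^{p\times n}$ of the $n-1$ competitors, i.e. $(n-1)pn$ new unknowns each iteration — treated as free, since the LASSO subproblem \eqref{eq:admmcolumnA} is non-linear, which can only postpone a breach, never prevent it; and the single seed $\mathbf U^{0}$, a further $Tn$. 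Summing, after $k$ iterations there are $Tn+(n-1)(Tp+T)+(n-1)pnk$ scalar unknowns, so a confidentiality breach becomes possible once $Tnk\ge Tn+(n-1)(Tp+T)+(n-1)pnk$, which rearranges to $k\ge\bigl(Tn+(n-1)(Tp+T)\bigr)\big/\bigl(Tn-(n-1)pn\bigr)$ — the denominator being positive exactly because $T>(n-1)p$, which is implied by the standing assumption $T>np$ — and taking ceilings gives the claimed value of $k$.

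As in the companion statements, the genuinely delicate point is not the arithmetic but the modelling convention underneath it. The $T\times n$ identities above are \emph{bilinear} in the pairs $(\mathbf Z_{A_j},\mathbf B_{A_j}^m)$, so "more scalar equations than scalar unknowns'' delivers recoverability only generically, for almost every realised trajectory rather than unconditionally; and one must argue carefully that $\mathbf U^{0}$ (equivalently, whatever single block of the shared state the protocol does not hand to the data owners) is the only part of the common ADMM state a data owner cannot derive from the messages, so that no additional free unknowns are being overlooked. Establishing these two points rigorously is the main obstacle; the counting itself then mirrors the proof of the preceding central-node proposition, with $Tn\to Tn$ knowns per iteration but $pn\to (n-1)pn$ unknowns per iteration and the fixed-unknown budget inflated from $Tp$ to $Tn+(n-1)(Tp+T)$.
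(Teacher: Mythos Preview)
Your proposal is correct and follows essentially the same equation-counting argument as the paper: both treat Owner~\#1 as semi-trusted, reduce the hidden ADMM state to the single seed $\mathbf U^{0}$ via the recursive structure of \eqref{eq:admmcolumnB}--\eqref{eq:admmcolumnC}, and then balance $Tnk$ received scalars against $Tn+(n-1)(Tp+T)+(n-1)pnk$ unknowns to obtain the stated $k$. Your derivation of the clean one-step recursion $2\mathbf U^{m+1}=\mathbf U^{m}-\mathbf m^{m+1}$ and your explicit caveat about the bilinear (hence only generic) solvability of the resulting system are, if anything, more transparent than the paper's iteration-by-iteration bookkeeping, but the underlying idea is the same.
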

}

\begin{proof}
Without loss of generality, Owner~\#1 is considered the {semi-trusted} data owner {--- a semi-trusted data owner completes and shares his/her computations faithfully, but tries to learn additional information while or after the algorithm runs.} For each iteration $k$, this data owner receives the intermediate matrix $\overline{\mathbf H}^k-\underbrace{\overline{\mathbf{ZB}}^k}_{=\frac{1}{n}\sum_{i=1}^n \mathbf{Z}_{A_i}\mathbf B_{A_i}^k}-\mathbf U^k{\in \mathbb{R}^{T\times n}}$, which provides $T n$ values. However, Owner~\#1 does not know 
$$\underbrace{-\mathbf U^k+\overline{\mathbf H}^{k}}_{\in \mathbb{R}^{T \times n}},\underbrace{\mathbf{B}_{A_2}^k,\dots,\mathbf{B}_{A_n}^k}_\text{$n-1$ matrices $\in \mathbb{R}^{p\times n}$},\underbrace{\mathbf{Z}_{A_2},\dots,\mathbf{Z}_{A_n}}_\text{$n-1$ matrices $\in \mathbb{R}^{T\times p}$},\underbrace{\mathbf{Y}_{A_2},\dots,\mathbf{Y}_{A_n}}_\text{$n-1$ matrices $\in \mathbb{R}^{T\times 1}$},$$
which corresponds to $T n + (n-1) p n +(n-1)T p + (n-1)T$ values. However, since all the data owners know that $\overline{\mathbf{H}}^k$ and $\mathbf U^k$ are defined by the expressions in~\eqref{eq:admmcolumnB} and~\eqref{eq:admmcolumnC}, {it is possible to perform some} simplifications in {which} $\mathbf U^k$ and $\overline{\mathbf H}^k-\overline{\mathbf{ZB}}^k-\mathbf U^k$ becomes \eqref{eq:Usimplified} and  \eqref{eq:Msimplified}, respectively,
{\begin{equation}\label{eq:Usimplified}
\begin{split}
    \mathbf U^{k} \stackrel{\eqref{eq:admmcolumnC}}{=}&\mathbf{U}^{k-1}+\overline{\mathbf{ZB}}^k-\overline{\mathbf H}^k= \mathbf{U}^{k-1}+\overline{\mathbf{ZB}}^k-\underbrace{\frac{1}{N+\rho}\left(\mathbf Y+ \rho \overline{\mathbf{ZB}}^{k}+\rho \mathbf U^{k-1}\right)}_\text{$= \overline{\mathbf H}^k$, according to~\eqref{eq:admmcolumnB}}\\ =&\Big[1-\frac{\rho}{N+\rho} \Big] \mathbf U^{k-1} +  \Big[1-\frac{\rho}{N+\rho} \Big] \overline{\mathbf{ZB}}^{k} -\frac{1}{N+\rho} \mathbf Y \ ,
\end{split}
\end{equation}
}
{
\begin{equation}\label{eq:Msimplified}
\begin{split}
    \overline{\mathbf H}^k{-}\overline{\mathbf{ZB}}^k{-}\mathbf U^k &{=} \underbrace{\frac{1}{N+\rho}\left(\mathbf Y+ \rho \overline{\mathbf{ZB}}^{k}+\rho \mathbf U^{k-1}\right)}_\text{=$\overline{\mathbf{H}}^k$, according to~\eqref{eq:admmcolumnB}}-\overline{\mathbf{ZB}}^k-\mathbf U^{k} .
\end{split}
\end{equation}
}

Therefore, the iterative process to find the competitors' data proceeds as follows:

\begin{enumerate}
\item \textit{Initialization:} The central node generates $\mathbf U^0 \in \mathbb{R}^{T \times n}$, and the $i$-th data owner generates $\mathbf B_{A_i}^1 \in \mathbb{R}^{p \times n}$, $i \in \{1,...,n\}$. 

\item \textit{Iteration \#1:} The central node receives $\mathbf Z_{A_i} \mathbf B_{A_i}^1$ and computes $\mathbf U^1$, returning $\overline{\mathbf H}^1-\overline{\mathbf{ZB}}^1-\mathbf U^1 \in \mathbb{R}^{T \times n}$ which is returned for all $n$ data owners.  At this point, Owner~\#1 receives $T n$ values and does not know  
$$\underbrace{\mathbf U^0}_{\in \mathbb{R}^{T\times n}}, \underbrace{\mathbf B_{A_2}^1,...,\mathbf B_{A_n}^1}_\text{$n-1$ matrices $\in \mathbb{R}^{p\times n}$}, \underbrace{\mathbf Z_{A_2},...,\mathbf Z_{A_n}}_\text{$n-1$ matrices $\in \mathbb{R}^{T\times p}$} \ ,$$ and $n{-}1$ columns of $\mathbf Y\in \mathbb{R}^{T\times n}$, corresponding to $T n+ (n-1)[p n +T p+T]$ values.

\item \textit{Iteration \#2:} The central node receives $\mathbf Z_{A_i} \mathbf B_{A_i}^2$ and computes $\mathbf U^2$, returning $\overline{\mathbf H}^2-\overline{\mathbf{ZB}}^2-\mathbf U^2$ for the $n$ data owners.  At this point, only new estimations for the vectors $\mathbf B_{A_2},...,\mathbf B_{A_n}$ were introduced in the system, which means more $(n{-}1) p n$ values to estimate.
\end{enumerate}
As a result, at the end of $k$ iterations, Owner~\#1 has received {$\mathbf Z_{A_i}\mathbf B_{A_i}^1,\dots, \mathbf Z_{A_i}\mathbf B_{A_i}^k\in \mathbb{R}^{T\times n}$} corresponding to $T n k$ values and needs to estimate 
$$\underbrace{\mathbf U^0}_{\in \mathbb{R}^{T\times n}}, \underbrace{\mathbf B_{A_2}^1,...,\mathbf B_{A_n}^1, \mathbf B_{A_2}^2,...,\mathbf B_{A_n}^2,\dots, \mathbf B_{A_2}^k,...,\mathbf B_{A_n}^k}_\text{$(n-1)k$ matrices $\in \mathbb{R}^{p\times n}$}, \underbrace{\mathbf Z_{A_2},...,\mathbf Z_{A_n}}_\text{$n-1$ matrices $\in \mathbb{R}^{T\times p}$} \ ,$$ and $n{-}1$ columns of $\mathbf Y\in \mathbb{R}^{T\times n}$, corresponding to $T n {+} (n{-}1)[k p n {+}T p {+}T]$. Then, the solution for the inequality
\begin{equation}
Tnk \geq T n + (n-1)[k p n  +T p +T] { \ ,}    
\end{equation}  allows to infer that a confidentiality breach may occur at the end of
\begin{equation}
k=\ceil[\bigg]{\frac{Tn+(n-1)(Tp+T)}{Tn-(n-1)pn}}    
\end{equation} 
iterations. {\qed}
\end{proof}

Figure~\ref{fig:owner_fixedp} illustrates the $k$ value for different combinations of $T$, $n$, and $p$. In general, the greater the number of records $T$, the smaller the number of iterations necessary for confidentiality breach. That is because more information is shared during each iteration of the ADMM algorithm. On the other hand, the number of iterations until a possible confidentiality breach increases with the number of data owners ($n$). The same is true for the number of lags ($p$).

\begin{figure}
\centering
\includegraphics{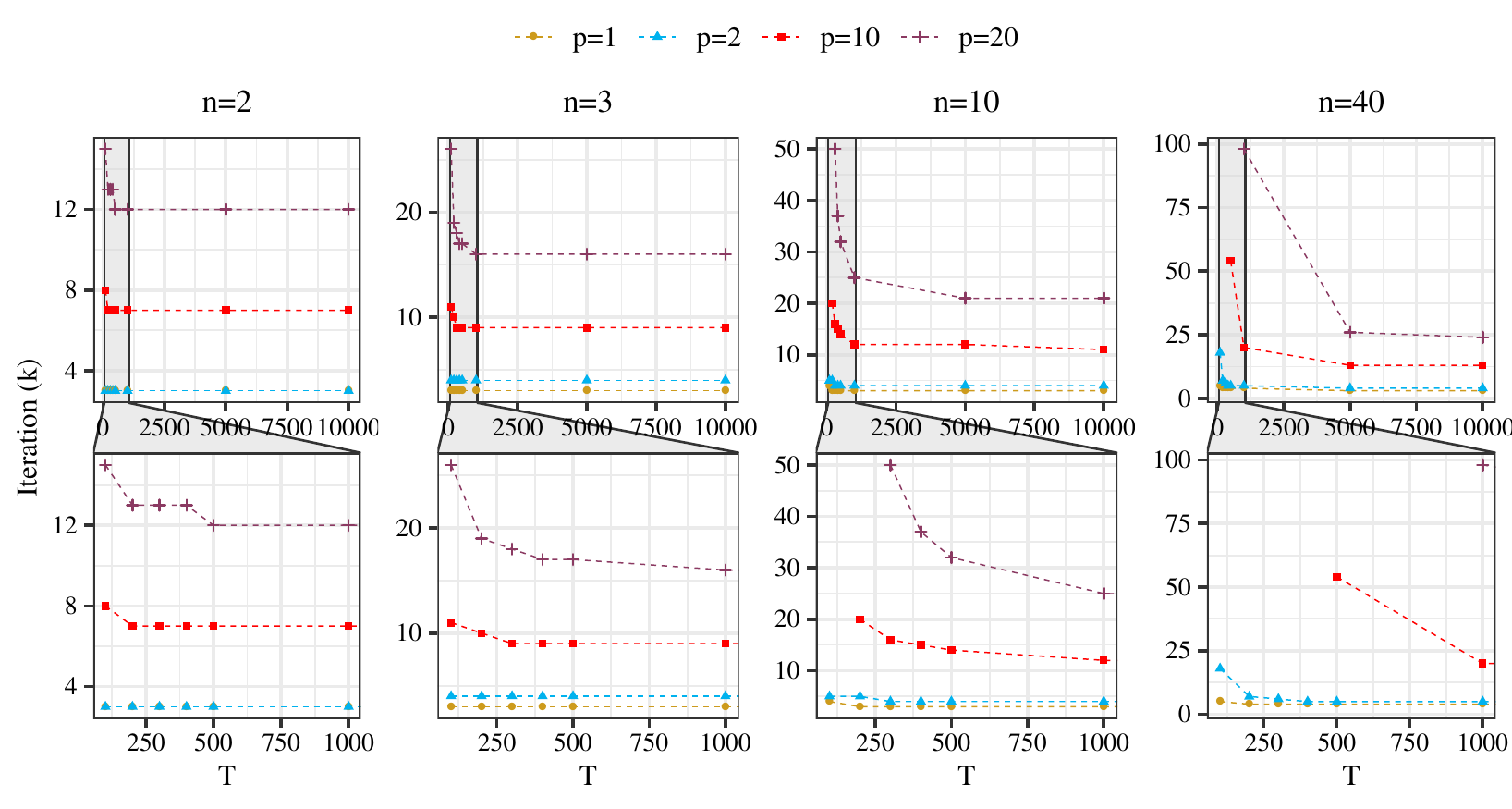}
\caption{{Number of iterations until a possible confidentiality breach, considering the centralized ADMM-based algorithm in~\citep{zhang2019admm}}.}
\label{fig:owner_fixedp}
\end{figure}

\subsubsection{ADMM Method and Noise Mechanisms}\label{subsec:admmnoiseAnalysis}

{The target matrix $\mathbf Y=[\mathbf Y_{A_1},\dots,\mathbf Y_{A_n}]$ corresponds to the sum of private matrices $\mathbf I_{\mathbf Y_{A_i}}\in \mathbb{R}^{T\times n}$, i.e.
\begin{equation}\label{eq:y_decompose}
\setlength{\tabcolsep}{2pt}
\underbrace{\left[\begin{tabular}{lllll}
$y_{1,t}$ & $y_{2,t}$ & \dots & $y_{n,t}$\\
$y_{1,t+1}$ & $y_{2,t+1}$ &\dots & $y_{n,t+1}$\\
$y_{1,t+2}$ & $y_{2,t+2}$ & \dots & $y_{n,t+2}$\\
\vdots &$\ddots$ & \vdots\\
$y_{1,t+h}$ & $y_{2,t+h}$ &\dots & $y_{n,t+h}$\\
\end{tabular}\right]}_{\mathbf Y}=
\underbrace{\left[\begin{tabular}{lllll}
$y_{1,t}$ & 0 & \dots & 0\\
$y_{1,t+1}$ & 0 &\dots & 0\\
$y_{1,t+2}$ & 0 & \dots & 0\\
\vdots &$\ddots$ & \vdots\\
$y_{1,t+h}$ & 0 &\dots & 0\\
\end{tabular}\right]}_{\mathbf I_{Y_{A_1}}}+
\underbrace{\left[\begin{tabular}{lllll}
0 & $y_{2,t}$  & \dots & 0\\
0 & $y_{1,t+1}$  &\dots & 0\\
0 & $y_{1,t+2}$  & \dots & 0\\
\vdots &$\ddots$ & \vdots\\
0 & $y_{1,t+h}$  &\dots & 0\\
\end{tabular}\right]}_{\mathbf I_{Y_{A_2}}}+ \dots+
\underbrace{\left[\begin{tabular}{lllll}
0 & 0 & \dots & $y_{n,t}$\\
0 & 0 &\dots & $y_{n,t+1}$\\
0 & 0 & \dots & $y_{n,t+2}$\\
\vdots &$\ddots$ & \vdots\\
0 & 0 &\dots & $y_{n,t+h}$\\
\end{tabular}\right]}_{\mathbf I_{Y_{A_n}}} \ ,
\end{equation}
where $[\mathbf I_{\mathbf Y_{A_i}}]_{i,j}{=}0$ in cases where the entry (i, j) of $\mathbf Y$ is from $i$-th data owner and $[\mathbf I_{\mathbf Y_{A_i}}]_{i,j}{=}[\mathbf Y]_{i,j}$ otherwise.}

Since the LASSO-VAR ADMM formulation is provided by~\eqref{eq:admmcolumn}, {at iteration $k$, data owners receive the intermediate matrix $\overline{\mathbf H}^k-\overline{\mathbf{ZB}}^k-\mathbf U^k$ and then update their local solution through~\eqref{eq:admmcolumnA}. The combination of~\eqref{eq:Usimplified} with~\eqref{eq:y_decompose} allows to rewrite $\mathbf U^{k}$ as
\begin{equation}\label{eq:u}
\mathbf U^k=\Big[1-\frac{\rho}{N+\rho} \Big] \mathbf U^{k-1} + \sum_{i=1}^n \underbrace{\Big[1-\frac{\rho}{N+\rho} \Big] \frac{1}{n} \mathbf Z_{A_i} \mathbf B_{A_i}^{k} -\frac{1}{N+\rho} \mathbf I_{\mathbf Y_{A_i}}}_\text{information from owner $i$} \ ,
\end{equation}
and, similarly, $\overline{\mathbf H}^k-\overline{\mathbf{ZB}}^k$ can be rewritten as
\begin{equation}\label{eq:hzb}
\begin{split}
\overline{\mathbf H}^{k} - \overline{\mathbf{ZB}}^k &=
\frac{1}{N+\rho}\mathbf Y+\Big[\frac{\rho}{N+\rho}-1\Big] \overline{\mathbf{ZB}}^k + \frac{\rho}{N+\rho} \mathbf U^{k-1} - \mathbf U^k \\&=
\sum_{i=1}^n \underbrace{\Big(\frac{1}{N+\rho}\mathbf I_{\mathbf Y_{A_i}}+\Big[\frac{\rho}{N+\rho}-1\Big]\frac{1}{n} \mathbf Z_{A_i} \mathbf B_{A_i}^k\Big)}_\text{information from owner $i$} + \frac{\rho}{N+\rho} \mathbf U^{k-1} - \mathbf U^k \ ,\\
\end{split}
\end{equation}}
where 
\begin{equation}
\mathbf Y=\sum_{i=1}^n \mathbf I_{\mathbf Y_{A_i}} \ ,    
\end{equation} 
\begin{equation}
\overline{\mathbf{ZB}}^{k+1}=\sum_{i=1}^n \frac{\rho}{n}\mathbf{Z}_{A_i}\mathbf B^{k+1}_{A_i} \ .   
\end{equation}  { By analyzing~\eqref{eq:u} and~\eqref{eq:hzb}, it is possible to verify that data owner $i$ only needs to share 
\begin{equation}\label{eq:int}
\frac{1}{N+\rho}\mathbf I_{\mathbf Y_{A_i}}+\Big[\frac{\rho}{N+\rho}-1\Big]\frac{1}{n} \mathbf Z_{A_i} \mathbf B_{A_i}^k \ ,
\end{equation}
for the computation of $\overline{\mathbf H}^k-\overline{\mathbf{ZB}}^k-\mathbf U^k$.
}

Let $\mathbf W_{1, A_i} \in \mathbb{R}^{T \times n}$, $\mathbf W_{2, A_i} \in \mathbb{R}^{T \times p}$, $\mathbf W_{3, A_i} \in \mathbb{R}^{p \times n}$, $\mathbf W_{4,  A_i} \in \mathbb{R}^{T \times n}$, represent noise matrices generated according to the differential privacy framework. The noise mechanism could be introduced by 
\begin{enumerate}[(i)]
\item adding noise to the data {itself}, i.e. replacing $\mathbf I_{\mathbf Y_{A_i}}$ and $\mathbf{Z}_{A_i}$ by
\begin{equation}\label{eq:noise_data}
    \mathbf I_{\mathbf Y_{A_i}}+\mathbf W_{1, A_i} \text{ and } \mathbf{Z}_{A_i}+\mathbf W_{2,A_i} \ ,
\end{equation} 
\item adding noise to the estimated coefficients, i.e. replacing $\mathbf B_{A_i}^k$ by
\begin{equation}\label{eq:noise_coefs}
    \mathbf{B}^{k}_{A_i}+\mathbf W_{3,  A_i} \ ,
\end{equation} 
\item adding noise to the intermediate matrix{~\eqref{eq:int}}, 
\begin{equation}\label{eq:noise_iterm}
   \frac{1}{N+\rho}\mathbf I_{\mathbf Y_{A_i}}+\Big[\frac{\rho}{N+\rho}-1\Big]\frac{1}{n} \mathbf Z_{A_i} \mathbf B_{A_i}^k+\mathbf W_{4,  A_i}.
\end{equation}
\end{enumerate}

{The addition of noise to the data itself~\eqref{eq:noise_data} was empirically analyzed in Subsection~\ref{subsec:noiseaddition_experiments} and, as verified, confidentiality comes at the cost of model accuracy deterioration.} The question is whether {adding noise to the coefficients or intermediate matrix} can ensure that data {are not recovered at the end of a number of iterations.
}

{
\begin{proposition}
Consider noise addition in an ADMM-based framework by 
\begin{enumerate}[(i)]
\item adding noise to the coefficients, as described in~\eqref{eq:noise_coefs};
\item adding noise to the exchanged intermediate matrix, as described in~\eqref{eq:noise_iterm}.

\end{enumerate}

Then, in both cases, a semi-trusted data owner can recover the data at the end of 
\begin{equation}
k=\ceil[\bigg]{\frac{Tn+(n-1)(Tp+T)}{Tn-(n-1)pn}}    
\end{equation} 
iterations.

\end{proposition}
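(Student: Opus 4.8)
The plan is to piggyback on the counting argument already used for Proposition~\ref{prop:zhang2}, showing that neither noise mechanism introduces genuinely new degrees of freedom: the added noise is simply absorbed into unknowns that a semi-trusted Owner~\#1 was already going to eliminate. As a preliminary step I would rewrite the only quantity actually returned to the owners, $\overline{\mathbf H}^k-\overline{\mathbf{ZB}}^k-\mathbf U^k$, in the per-owner form~\eqref{eq:hzb} together with the recursion~\eqref{eq:u} for $\mathbf U^k$, so that every transmitted matrix is displayed as an explicit linear function of $\mathbf U^0$, the covariate blocks $\mathbf Z_{A_i}$, the (lagged) target blocks $\mathbf I_{\mathbf Y_{A_i}}$, and the coefficient iterates $\mathbf B^k_{A_i}$. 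From Owner~\#1's viewpoint the target of the attack is just the competitors' $\mathbf Z_{A_i}$ and $\mathbf Y_{A_i}$ ($i\ge 2$) plus $\mathbf U^0$; the coefficient iterates are nuisance unknowns that get eliminated, exactly as in the proof of Proposition~\ref{prop:zhang2}.

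For case~(i) I would note that substituting $\mathbf B^k_{A_i}+\mathbf W_{3,A_i}$ for $\mathbf B^k_{A_i}$ throughout~\eqref{eq:u}--\eqref{eq:hzb} is nothing but running the clean protocol with the perturbed sequence $\widetilde{\mathbf B}^k_{A_i}:=\mathbf B^k_{A_i}+\mathbf W_{3,A_i}$, which still lives in $\mathbb R^{p\times n}$. Since Owner~\#1 never attempts to separate $\mathbf B^k_{A_i}$ from $\mathbf W_{3,A_i}$, its list of unknowns is term-for-term identical to the one in Proposition~\ref{prop:zhang2}: $\mathbf U^0$, the $(n-1)k$ matrices $\widetilde{\mathbf B}^j_{A_i}\in\mathbb R^{p\times n}$, the $n-1$ matrices $\mathbf Z_{A_i}\in\mathbb R^{T\times p}$, and the $n-1$ unknown columns of $\mathbf Y$. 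The same inequality $Tnk\ge Tn+(n-1)[kpn+Tp+T]$ then yields $k=\ceil{\frac{Tn+(n-1)(Tp+T)}{Tn-(n-1)pn}}$.

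For case~(ii) the route is to exploit the linearity of the intermediate matrix~\eqref{eq:int} in its two ingredients: perturbing owner $i$'s contribution by $\mathbf W_{4,A_i}$ is equivalent to running the clean protocol with $\mathbf I_{\mathbf Y_{A_i}}$ replaced by $\mathbf I_{\mathbf Y_{A_i}}+(N+\rho)\mathbf W_{4,A_i}$ and the clean $\mathbf Z_{A_i}\mathbf B^k_{A_i}$, i.e.\ it is the data-noise mechanism~\eqref{eq:noise_data} restricted to the target channel. The point I would emphasise is that $\mathbf Z_{A_i}$ is left untouched and still enters through the $\mathbf Z_{A_i}\mathbf B^k_{A_i}$ blocks of~\eqref{eq:hzb}; because in a VAR the block $\mathbf Z_{A_i}$ consists of lagged copies of $\mathbf Y_{A_i}$, recovering $\mathbf Z_{A_i}$ exactly already discloses all but $p$ entries of the true $\mathbf Y_{A_i}$, so a breach in the sense of the paper still occurs. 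The delicate part — and the step I expect to be the main obstacle — is the bookkeeping: a full $T\times n$ perturbation $\mathbf W_{4,A_i}$ superficially looks like it adds extra unknowns, so one must argue that it folds into quantities already present (treating the noisy per-owner ``$\mathbf Y$-channel'' as the unknown, which is dimensionally the same $T$-per-owner object as $\mathbf Y_{A_i}$, or merging the fixed noise into the already-unknown, $\mathbf U^0$-like, part of the recursion~\eqref{eq:u}), so that the equation/unknown balance is unchanged. Once this reconciliation is made, the same inequality and hence the same $k$ follow as in case~(i).
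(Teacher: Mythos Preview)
Your treatment of case~(i) is exactly the paper's: define $\mathbf B'^{k}_{A_i}=\mathbf B^{k}_{A_i}+\mathbf W_{3,A_i}$, observe that Owner~\#1 never needs to separate the two summands, and reuse the Proposition~\ref{prop:zhang2} count verbatim.

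For case~(ii) you diverge from the paper, and the divergence is precisely where you flag the ``main obstacle''. The paper does \emph{not} absorb $\mathbf W_{4,A_i}$ into the $\mathbf I_{\mathbf Y_{A_i}}$ channel; it absorbs it into the \emph{coefficient} channel, rewriting the noisy contribution~\eqref{eq:noise_iterm} as
\[
\frac{1}{N+\rho}\mathbf I_{\mathbf Y_{A_i}}+\Big[\frac{\rho}{N+\rho}-1\Big]\frac{1}{n}\,\mathbf Z_{A_i}\Big[\underbrace{\mathbf B_{A_i}^k+\Big[\tfrac{\rho}{N+\rho}-1\Big]^{-1}\mathbf Z_{A_i}^{-1}\mathbf W_{4,A_i}}_{=\mathbf B'_{A_i}}\Big],
\]
so that case~(ii) is literally an instance of case~(i) with $\mathbf B'_{A_i}\in\mathbb R^{p\times n}$. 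This keeps the per-iteration nuisance unknowns at size $p\times n$, and the inequality $Tnk\ge Tn+(n-1)[kpn+Tp+T]$ follows unchanged.

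Your route---pushing $\mathbf W_{4,A_i}$ onto $\mathbf I_{\mathbf Y_{A_i}}$---does not close without further argument, and the obstacle you anticipate is real: $\mathbf I_{\mathbf Y_{A_i}}$ contributes only $T$ unknowns (one nonzero column), whereas $\mathbf W_{4,A_i}\in\mathbb R^{T\times n}$ is a full $Tn$ object, and it may be refreshed at each iteration. Folding it into the single $\mathbf U^0$ slot does not absorb $(n-1)$ independent per-owner perturbations, and treating the ``noisy $\mathbf Y$-channel'' as the unknown changes the count from $(n-1)T$ to $(n-1)Tn$ (or worse, $(n-1)Tnk$), which breaks the inequality. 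The paper's reduction to case~(i) is what sidesteps this; that is the missing idea in your sketch. (One may fairly note that the paper's use of $\mathbf Z_{A_i}^{-1}$ for a $T\times p$ matrix is informal, but the intended mechanism---treat the whole $\mathbf Z_{A_i}\mathbf B_{A_i}^k$-plus-noise block as $\mathbf Z_{A_i}$ times a $p\times n$ nuisance---is the device that preserves the dimension count.)
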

}


\begin{proof} {These statements are promptly deduced from the Proof presented for Proposition~\ref{prop:zhang2}. Without loss of generality, Owner~\#1 is considered the semi-trusted data owner.} 

\begin{enumerate}[(i)]
\item {Owner~\#1} can estimate $\mathbf B_{A_i}$, without distinguishing between $\mathbf B_{A_i}$ and $\mathbf W_{3,A_i}$ in~\eqref{eq:noise_coefs}, by recovering $\mathbf I_{\mathbf Y_{A_i}}$ and $\mathbf{Z}_{A_i}$.
{Let $\mathbf B'_{A_i}=\mathbf B_{A_i}+\mathbf W_{3,A_i}$ and $\overline{\mathbf H'}^k$, $\mathbf U'^k$ be the matrices $\overline{\mathbf H}^k$, $\mathbf U^k$ replacing $\mathbf B'_{A_i}$ by $\mathbf B_{A_i}$.
Then, at iteration $k$ Owner~\#1 receives $\overline{\mathbf H'}^k-\overline{\mathbf{ZB'}}^k-\mathbf U'^k \in \mathbb{R}^{T\times n}$ ($Tn$ values) and does not know
$$\underbrace{\overline{\mathbf H'}^{k}-\mathbf U'^k}_{\in \mathbb{R}^{T \times n}},\underbrace{\mathbf{B'}_{A_2}^k,\dots,\mathbf{B'}_{A_n}^k}_\text{$n-1$ matrices $\in \mathbb{R}^{p\times n}$},\underbrace{\mathbf{Z}_{A_2},\dots,\mathbf{Z}_{A_n}}_\text{$n-1$ matrices $\in \mathbb{R}^{T\times p}$},\underbrace{\mathbf{Y}_{A_2},\dots,\mathbf{Y}_{A_n}}_\text{$n-1$ matrices $\in \mathbb{R}^{T\times 1}$},$$
which corresponds to $T n + (n-1) p n +(n-1)T p + (n-1)T$ values. As in Proposition~\ref{prop:zhang2}, this means that, after $k$ iterations, Owner~\#1 has received $T n k$ values and needs to estimate 
$$\underbrace{\mathbf U'^0}_{\in \mathbb{R}^{T\times n}}, \underbrace{\mathbf B'^1_{A_2},...,\mathbf B'^ 1_{A_n}, \mathbf B'^2_{A_2},...,\mathbf B'^2_{A_n},\dots, \mathbf B'^k_{A_2},...,\mathbf B'^k_{A_n}}_\text{$(n-1)k$ matrices $\in \mathbb{R}^{p\times n}$}, \underbrace{\mathbf Z_{A_2},...,\mathbf Z_{A_n}}_\text{$n-1$ matrices $\in \mathbb{R}^{T\times p}$} \ , $$ and $n{-}1$ columns of $\mathbf Y\in \mathbb{R}^{T\times n}$, corresponding to $T n {+} (n{-}1)[k p n {+}T p {+}T]$. Then, the solution for the inequality $Tnk \geq T n + (n-1)[k p n  +T p +T]$ allows to infer that a confidentiality breach may occur at the end of $$k=\ceil[\bigg]{\frac{Tn+(n-1)(Tp+T)}{Tn-(n-1)pn}}$$ iterations.}

\item {Since Owner~\#1 can estimate $\mathbf B_{A_i}$ by recovering data, adding noise to the intermediate matrix reduces to the case of adding noise to the coefficients, in (i), because Owner~\#1 can rewrite~\eqref{eq:noise_iterm} as 
\begin{equation}
\frac{1}{N+\rho}\mathbf I_{\mathbf Y_{A_i}}+\Big[\frac{\rho}{N+\rho}-1\Big]\frac{1}{n} \mathbf Z_{A_i} \Big[\underbrace{\mathbf B_{A_i}^k+\Big[\frac{\rho}{N+\rho}-1\Big]^{-1}\mathbf Z_{A_i}^{-1}\mathbf W_{4,A_i}}_{=\mathbf B'_{A_i}}\Big].
\end{equation}
{\qed}
}
\end{enumerate}

\end{proof}

\section{Discussion}
\label{sec:discussion}

\begin{table}
\centering
\caption{Summary of state-of-the-art privacy-preserving approaches.}
\label{tab:approaches}
\begin{tabular}{p{10.8em}p{6em}p{8.4em}p{8.4em}}
\hline
& & \bf Split by features & \bf Split by records\\
\hline
\bf Data Transformation &  &  \small\cite{mangasarian2011privacy}       & \small\cite{mangasarian2012privacy}, \cite{yu2008privacy}, \cite{dwork2014analyze}  \\
\hline
\bf \multirow{2}{10.8em}{Secure Multi-party Computation} & Linear\newline Algebra       &  \small\cite{du2004privacy}, \cite{karr2009privacy}, \cite{zhu2015privacy}, \cite{fan2014adaptive}*, \cite{soria2017individual} & \small\cite{zhu2015privacy}, \cite{aono2017input}  \\ \cline{2-4}
& Homomorphic-cryptography &   \small\cite{yang2019federated}, \cite{hall2011secure}, \cite{gascon2017privacy}, \cite{slavkovic2007secure}       &  \small\cite{yang2019federated}, \cite{hall2011secure}, \cite{nikolaenko2013privacy}, \cite{chen2018privacy}, \cite{jia2018preserving}, \cite{slavkovic2007secure}  \\
\hline
\bf \multirow{3}{10.8em}{Decomposition-based Methods} & Pure & \small\cite{pinson2016introducing}, \cite{zhang2018distributed} &    \small\cite{wu2012g}, \cite{lu2015webdisco}, \cite{ahmadi2010privacy}, \cite{mateos2010distributed} \\\cline{2-4}
& Linear\newline Algebra & \small\cite{li2015vertical}, \cite{han2010privacy} & \small\cite{zhang2017dynamic}, \cite{huang2018dp}, \cite{zhang2018recycled}\\\cline{2-4}
& Homomorphic-cryptography & \small\cite{yang2019federated}, \cite{li2012efficient}*, \cite{liu2018practical}*, \cite{li2018ppma}*, \cite{fienberg2009valid}, \cite{mohassel2017secureml} & \small\cite{yang2019federated}, \cite{zhang2019admm}, \cite{fienberg2009valid}, \cite{mohassel2017secureml}  \\
\hline
\multicolumn{4}{l}{\scriptsize * secure data aggregation.}\\
\end{tabular}
\end{table}

Table~\ref{tab:approaches} summarizes {the} methods from the literature. 
These algorithms for {the} privacy-preserving {ought} to be carefully {built and consider} two {key} components: (i)~how data is distributed between data owners, and (ii)~the statistical model used. The decomposition-based methods are very sensitive to data partition, while data transformation and cryptography-based methods are very sensitive to problem structure, with the exception of the differential privacy methods, which simply add random noise, from specific probability distributions, to the data itself. This property makes these methods appealing, but differential privacy usually involves a trade-off between accuracy and privacy.

Cryptography-based methods are usually more {effective against} confidentiality {breaches}, but {they have} some disadvantages:
(i)~some of them require a third-party for keys generation, as well as external entities to perform the computations in the encrypted domain; (ii)~challenges in the scalability and implementation efficiency, which {are} mostly due to the high computational complexity and overhead of existing homomorphic encryption schemes~\citep{de2012design, zhao2019secure, tran2019privacy}. {Regarding} some protocols, such as secure {multi-party} computation {through homomorphic cryptography}, communication complexity grows exponentially with the number of records~\citep{rathore2015survey}.

{Data transformation methods do not affect the computational time for training the model, since each data owner transforms his/her data before the model fitting process. The same is true for the decomposition-based methods in which data is split by data owners. The secure multi-party protocols have the disadvantage of transforming the information while fitting the statistical model, which implies a higher computational cost.}

As already mentioned, the {main} challenge {to the application of} the existent privacy-preserving algorithms in the VAR model is the fact that $\mathbf Y$ and $\mathbf Z$ share a high percentage of values, not only during the fitting of the statistical model but also when using it to perform forecasts. {A confidentiality breach may occur during the forecasting process if, after the model is estimated, the algorithm to maintain privacy provides the coefficient matrix $\mathbf B$ for all data owners}. When using the estimated model to perform forecasts, assuming that each $i$-th data owner sends their own contribution for the time series forecasting to every other $j$-th data owner:

\begin{enumerate}
\item In the LASSO-VAR models with one lag, since $i$-th data owner sends $y_{i,t} [\mathbf B^{(1)}]_{i,j}$ for $j$-th data owner, {the value $y_{i,t}$ may be directly recovered when the coefficient $[\mathbf B^{(1)}]_{i,j}$ is known by all data owners}, being $[\mathbf B^{(1)}]_{i,j}$ the coefficient associated with lag $1$ of time series $i$, to estimate $j$.
\item In the LASSO-VAR models with $p$ consecutive lags, the forecasting of a new timestamp only requires the introduction of one new value in the covariate matrix of the $i$-th data owner. {In other words}, at the end of $h$ timestamps, the $j$-th data owner {receives the} $h$ values. {However, t}here are $h+p$ values that {the data owner} does not know about. This may represent a confidentiality breach since a {semi-trusted} data owner can assume different possibilities for the initial $p$ values and then generate possible trajectories.
\item In the LASSO-VAR models with $p$ non-consecutive lags, $p_1,\dots,p_p$, at the end of $p_p-p_{p-1}$ timestamps, only one new value is introduced in the covariate matrix, meaning that the model is also subject to a {confidentiality breach}.
\end{enumerate}

Therefore, {and considering the issue of data} naturally split by features, it would be more advantageous to apply decomposition-based methods, since the time required for model fitting is not affected by data transformations and each data owner only has access to their own coefficients. However, with the state-of-the-art approaches, it is difficult to guarantee that these techniques can indeed offer a robust solution for data privacy {when addressing} data split by features.  
Finally, a remark about some specific business applications of VAR, where data owners know exactly some past values of the competitors.
For example, consider a VAR model with lags $\Delta t=1$, 2 and 24, which predicts the production of solar plants. Then, when forecasting the first sunlight hour of a day, all data owners will know that the previous lags 1 and 2 have zero production (no sunlight). Irrespective of whether the coefficients are shared or not, a confidentiality breach may occur. For these special cases, the estimated coefficients cannot be used for a long time horizon, and online learning may represent an efficient alternative.

The privacy issues analyzed in this paper are not restricted to the VAR
model nor to point forecasting tasks. Probabilistic forecasts, using data from different data owners (or geographical locations), can be generated with splines quantile regression~\citep{tastu2013probabilistic}, component-wise gradient boosting~\citep{bessa2015probabilistic}, a VAR that estimates the location parameter (mean) of data transformed by a logit-normal distribution~\citep{dowell2015very}, {linear} quantile regression with LASSO regularization~\citep{agoua2018probabilistic}, among others. These are {some} examples of collaborative probabilistic forecasting methods. However, none of them considers the confidentiality of the data. {Moreover}, the method proposed by~\cite{dowell2015very} {can be influenced by} the confidentiality breaches discussed thorough this paper, since the VAR model is directly used to estimate the mean of transformed data from the different data owners. On the other hand, when performing non-parametric models such as quantile regression, each quantile is estimated by solving an independent optimization problem, which means that the risk of a confidentiality breach increases with the number of quantiles being estimated. {Note that quantile regression-based models may be solved through the ADMM method~\citep{zhang2019admm}. However, as discussed in {S}ection~\ref{sec:decomp}, the {semi-trusted} agent may collect enough information to infer the confidential data. The quantile regression method may also be estimated by applying linear programming algorithms~\citep{agoua2018probabilistic}, which may be solved through homomorphic encryption, {despite} being computationally demanding for high-dimensional multivariate time series.}

\section{Conclusion}
\label{sec:conclusion}

{This paper presents a} critical overview of the literature techniques used to handle privacy issues in {collaborative forecasting} methods. {In addition, it also performs an analysis to} their application to the VAR model. The {aforementioned} existing techniques {are} divided into three {groups} of approaches to guarantee privacy: data transformation, secure multi-party computation and decomposition of the optimization problem into sub-problems.

For each {group}, several points can be concluded. Starting with \textit{data transformation techniques}, two remarks were made. The first one concerns the addition of random noise to the data. While the algorithm is simple to apply, this technique demands a trade-off between privacy and the correct estimation of the model's parameters{~\citep{yang2019federated}}. In our experiments, there was {a clear} model degradation even though the data kept its original behavior {({S}ection~\ref{subsec:noiseaddition_experiments})}. The second relates to the multiplication by a random matrix that is kept undisclosed. {Ideally,  and in what concerns} data where different data owners observe different variables, this secret matrix would post-multiply data, {thus enabling} each data owner to generate a few lines of this matrix. However, as demonstrated in equation~\eqref{eq:post_multipication} of {S}ection~\ref{sec:post_multiplication_multiple}, this transformation does not preserve the {estimated coefficients}, and the reconstruction of the original model may require sharing the matrices used to encrypt the data, thus exposing the original data.

The second group of techniques, \textit{secure multi-party computations}, introduce privacy in the intermediate computations by defining the protocols for addition and multiplication of the private datasets{, without confidentiality breaches using either linear algebra or homomorphic encryption methods.} {For independent records, data confidentiality is guaranteed for (ridge) linear regression through linear algebra-based protocols; not only do records need to be independent, but some also require that the target variable is known by all data owners. These assumptions might prevent their application when covariates and target matrices share a large proportion of values -- in the VAR model case, for instance.} This means that shared data between agents might be enough for competitors to be able to reconstruct the data. Homomorphic cryptography methods might result in computationally demanding techniques since each dataset value has to be encrypted. {The discussed protocols ensure privacy-preserving while using (ridge) linear regression if there are two entities that correctly performs the protocol without agent collusion. These entities are an external server (e.g., a cloud server) and an entity which generates the encryption keys. In some approaches, all data owners know the coefficient matrix $\mathbf B$ at the end of the model estimation. This is a disadvantage when applying models in which covariates include the lags of the target variable because confidentiality breaches may occur during the forecasting phase.}

Finally, {the} \textit{decomposition of the optimization problem} into sub-problems {(which} can be solved in parallel{)} have all the desired properties for a collaborative forecasting problem, since each data owner only estimates their coefficients.  {A common assumption of such methods is that the objective function is decomposable.} However, these approaches consist in iterative processes that require sharing intermediate results for the next update, meaning that each new iteration conveys more information about the secret datasets to the data owners, possibly breaching data confidentiality.


\section*{Acknowledgements}

The research leading to this work is being carried out as part of the Smart4RES project (European
Union’s Horizon 2020, No. 864337). Carla Gon\c{c}alves was supported by the Portuguese funding agency,
FCT (Funda\c{c}\~ao para a Ci\^encia e a Tecnologia), within the Ph.D. grant PD/BD/128189/2016 with financing
from POCH (Operational Program of Human Capital) and the EU. The sole responsibility for the content
lies with the authors. It does not necessarily reflect the opinion of the Innovation and Networks Executive
Agency (INEA) or the European Commission (EC), which are not responsible for any use that may be made
of the information it contains.

\appendix

\section{Differential Privacy}\label{diff-privacy}
Mathematically, a randomized mechanism $\mathcal A$ satisfies ($\varepsilon$,$\delta$)-differential privacy{~\citep{dwork2009differential}} if, for every possible output $t$ of $\mathcal A$  and for every pair of datasets $\mathbf D$ and $ \mathbf D'$ (differing in at most one record), 
\begin{equation}
    \mbox{Pr}(\mathcal A( \mathbf D) = t) \leq \delta + \exp(\varepsilon) \mbox{Pr}(\mathcal A( \mathbf D') = t).
\end{equation}
In practice, differential privacy can be achieved by adding random noise  $W$ to some desirable function $f$ of the data $\mathbf D$, i.e 
\begin{equation}
    \mathcal A(\mathbf D) = f(\mathbf D) +W. 
\end{equation}
The ($\varepsilon$,0)-differential privacy is achieved by applying noise from Laplace distribution with scale parameter $\frac{\Delta f_1}{\varepsilon}$, with $\Delta f_k = \max\{\|f(\mathbf D) - f(\mathbf D')\|_{k}\}$. A common alternative is the Gaussian distribution but, in this case, $\delta>0$ and the scale parameter which allows ($\varepsilon$,$\delta$)-differential privacy is $\sigma \geq \sqrt{2\log\Big(\frac{1.25}{\delta}\Big)} \frac{\Delta_2 f}{\varepsilon}$. {\cite{dwork2009differential} showed that the data can be masked by considering}
\begin{equation}
    \mathcal A(\mathbf D) = \mathbf D+\mathbf W.
\end{equation}

{
\section{Supplementary Data and Code}\label{app:data_code}
Supplementary material related to this article is available online (\url{https://doi.org/10.25747/gywm-9457}). The available material includes: 
\begin{itemize}
\item \texttt{admm\_functions.R}: R script with ADMM algorithm implementation.
\item \texttt{clear\_sky\_functions.R}: R script to estimate clear-sky solar power generation with the model described in~\cite{Bacher2009}.
\item \texttt{coef\_generator.R}: R script with the functions for generating VAR model coefficients, according to the implementation in~\citep{gmvarkit2020}.
\item \texttt{run\_experiments.R}: R script with the commands for generating the results of {S}ection~\ref{subsec:noiseaddition_experiments}.
\item \texttt{c\_sky.csv}: estimated clear-sky solar power generation. 
\item \texttt{normalized\_PVdata.csv}: normalized (with clear-sky model) solar power time series data. 
\item \texttt{PVdata.csv}: solar power time series data. 
\end{itemize}
}

\bibliography{library}

\end{document}